\newtheorem{definition}{Definition}
\newtheorem{example}{Example}
\newtheorem{remark}{Remark}
\newtheorem*{proposition*}{Proposition}
\def\wfomc{\mbox{\sc wfomc}}
\def\w{\mbox{\sc w}}
\def\bk{{\bm{k}}}
\def\bp{{\bm{p}}}
\def\bs{{\bm{s}}}
\title{Weighted First Order Model Counting with Directed Acyclic Graph Axioms}
\author[1,2]{\href{mailto:smalhotra@fbk.eu}{Sagar Malhotra}{}}
\author[1]{Luciano Serafini}
\affil[1]{
    Fondazione Bruno Kessler, Italy
}
\affil[2]{%
    University of Trento\\
    Italy
}
\begin{document}
\maketitle

\begin{abstract}
Statistical Relational Learning (SRL) integrates First-Order Logic (FOL) and probability theory for learning and inference over relational data. Probabilistic inference and learning in many SRL models can be reduced to Weighted First Order Model Counting (WFOMC). However, WFOMC is known to be intractable ($\mathrm{\#P_1-}$ complete). Hence, logical fragments that admit polynomial time WFOMC are of significant interest. Such fragments are called \emph{domain liftable}. Recent line of works have shown the two-variable fragment of FOL, extended with counting quantifiers ($\mathrm{C^2}$) to be domain-liftable. However, many properties of real-world data can not be modelled in $\mathrm{C^2}$. In fact many ubiquitous properties of real-world data are not expressible in FOL.  Acyclicity is one such property, found in citation networks, genealogy data, temporal data e.t.c.  In this paper we aim to address this problem by investigating the domain liftability of \emph{directed acyclicity} constraints. We show that the fragment $\mathrm{C^2}$ with a Directed Acyclic Graph (DAG) axiom, i.e., a predicate in the language is axiomatized to represent a DAG, is domain-liftable. We present a method based on principle of inclusion-exclusion for WFOMC  of $\mathrm{C^2}$ formulas extended with DAG axioms. 


%
\end{abstract}


\section{Introduction}
A large part of Statistical Relational Learning (SRL) \cite{SRL_LISA,SRL_LUC} is concerned with modelling, learning and inferring over large scale relational datasets. Most SRL models assume a probability distribution over the models of a First-Order Logic (FOL) language with a finite domain. Probabilistic inference and learning in many SRL models like  Markov logic networks \cite{richardson2006markov} and Probabilistic Logic Programs \cite{PLP} can be reduced to instances of \emph{Weighted First Order Model Counting} (WFOMC) \cite{broeck2013,PTP}. WFOMC is the task of computing the weighted sum of the models of a given First Order Logic (FOL) sentence $\Phi$ over a given finite domain of size $n$. Formally, 
\begin{equation}
    \label{eq: WFOMC}
    \wfomc(\Phi,\w,n) = \sum_{\omega\models\Phi}\w(\omega)
\end{equation}
where $\w$ is a \emph{weight function} that associates a real number to each interpretation $\omega$. Fragments of FOL that admit polynomial time WFOMC w.r.t the domain cardinality are known as \emph{domain-liftable} \cite{domain_lifted_defintion}. Hence, WFOMC provides a convenient theoretical and practical tool for investigating SRL models. This generality of WFOMC applications have led to significant interest in FOL fragments that are domain liftable \cite{LIFTED_PI_KB_COMPLETION,DOMAIN_RECURSION,kazemi2016new}. A sequence of results \cite{Symmetric_Weighted, broeck2013} have led to a rather clear picture of domain-liftability of FOL formulas with at most two-variables. Especially, these results have  shown that any FOL formula in the two-variable fragment is domain-liftable \cite{Symmetric_Weighted}. However, this positive result is also accompanied by the intractability results in WFOMC, showing that  there is an FOL formula in the three variable fragment, whose (W)FOMC can not be computed in polynomial time w.r.t the domain cardinality. 

Hence, a significant effort has been made in the recent years towards expanding the domain-liftability of the two-variable fragment of FOL, with additional constraints such as \emph{functionality constraint} \cite{kuusisto2018weighted}, \emph{cardinality constraints}, \emph{counting quantifiers} \cite{kuzelka2020weighted,AAAI_Sagar} and \emph{linear order axiom} \cite{Linear_Order_Axiom}.  \\

However, many aspects of data are not captured in the domain-liftable fragments. Furthermore, crucial attributes of real-world data may not even be FOL-definable. One such example is \emph{Directed Acyclic Graphs} (DAGs):
\begin{center}
    \emph{Acyclicity is not first-order expressible.} \\
    -- \emph{Chapter 6, \cite{immerman2012descriptive}}
\end{center}
 
Furthermore, the  inexpressivity proof presented in \cite{immerman2012descriptive} works for both directed and undirected acyclicity. In this paper, we will focus on directed acyclicity or equivalently DAGs. DAGs are ubiquitous data structures, that appear in all kinds of applications. Citation networks \cite{DAG_Citation} such as CiteSeer, Cora  and PubMed, can be modeled as  DAGs. Citation networks are acyclic because a paper cannot cite itself or cite a paper that cites it. In these networks, articles are represented as nodes, and directed edges represent the citation relationships between them. Another example of directed acyclicity is  genealogy trees that trace family relationships. Genealogy trees\footnote{"Trees" is a misnomer here, as family trees are akin to DAGs where a child node has more than one paerent} can be represented as DAGs, where nodes represent individuals and directed edges represent parent-child relationships. Hence, SRL models that can express an acyclicity constraints can significantly aid learning and inference tasks in many real-world datasets.



    


In this paper, we show that WFOMC in C$^2$ expanded with a DAG axiom is domain liftable, allowing us to efficiently answer questions like:
\begin{center}
 \emph{How many DAGs with exactly (or at least or at most) $k$ sources  and exactly $m$ (or at least or at most) sinks exist ?}\\
\end{center}


Besides its application to SRL, WFOMC is also a convinient language for investigating enumerative combinatorics, as any FOL definable concept can be enumerated using WFOMC. Hence, extending WFOMC with DAG axiom can allow investigating combinatorics of many new constraints. 
\section{Background}
In this section, we will briefly introduce the notation and concepts of FOL and WFOMC. Additionally, we will review previous research on WFOMC in FO$^2$ and C$^2$, highlighting their relevance to our findings. We will also cover the necessary combinatorial concepts, such as the inclusion-exclusion principle, and derive the formula for counting DAGs.


\subsection{Basic Notation.} 
The notation $[n]$ is used to represent the set of integers $\{1,\dots,n\}$. When it is clear from the context, we use $[\overline{m}]$ to represent the set $[n] \backslash [m]$, which is the set $\{m+1, \dots, n\}$. Bold letters (such as $\bm{k}$) denote vectors, and corresponding regular font letters with an additional index (such as $k_i$) denote the components of the vectors. We represent a vector of $u$ non-negative integers as $\bm{k} = \langle k_1,...,k_u \rangle$, and use $|\bk|$ to denote the sum of its components, which is given by $\sum_{i \in [u]}k_i$. We also use the multinomial coefficient notation given by:

\begin{equation*}
\binom{|\bk|}{k_1,...,k_u} = \binom{|\bk|}{\bk}= \frac{|\bk|!}{\prod_{i\in [u]} k_i !}
\end{equation*}
For a function $f(\bm{k})$ defined on $\bm{k}$, a summation of the form $\sum_{\bm{k}}f(\bm{k})$ is taken over all possible vectors $\bm{k}$ such that $\sum_{i\in[u]}k_i = |\bk|$. When we add or subtract two vectors $\bk' = \langle k'_1,...,k'_u \rangle $ and $\bk''= \langle k''_1,...,k''_u \rangle$, we perform element-wise addition or subtraction, i.e., $\bk' + \bk'' = \langle k'_1+k''_1,...,k'_u+k''_u \rangle$.

\subsection{First Order Logic} 

First-order logic is a formal system used to express statements and reason about objects in a precise and rigorous manner. As common in WFOMC literature, we will deal with a fragment of First-Order Logic (FOL), also known as the \emph{Herbrand Logic} \cite{Herbrand_Logic}, which can be succinctly described as follows:

\begin{align*}
    \text{Herbrand Logic} :=&  \text{ First-Order Logic Syntax} \\
     &+ \text{Herbrand Semantics}
\end{align*}

Formally, we assume a function-free FOL language $\mathcal{L}$, comprising a finite set of variables $\mathcal{V}$, a finite set of relational symbols $\mathcal{R}$, and a set of constants $\Delta$ also known as the \emph{domain}. Each relational symbol $R$ in $\mathcal{R}$ has an associated arity. We use the notation $R/k$ to denote the fact that the arity of the relational symbol $R$ is $k$, where $k$ is a non-negative integer. For example, we use $P/2$ to denote the fact that $P$ is a binary relation. An \emph{atom} is an expression of the form $R(a_1, \ldots, a_k)$, where $R/k \in \mathcal{R}$, and $a_1, \ldots, a_k \in \mathcal{V} \cup \Delta$. A \emph{literal} is either an atom or its negation. A formula in FOL can be formed by combining atoms with boolean operators ($\neg, \lor$ and $\land$), using the rules of FOL syntax. Furthermore, FOL admits quantification over variables, with quantifiers of the form $\exists x_i.$ (existential quantification) and $\forall x_i.$ (universal quantification). Hence, a formula in an FOL language $\mathcal{L}$ belongs to inductive closure of  strings formed by symbols in $\mathcal{L}$, composed with boolean operators and quantifiers, using the FOL syntax rules. \emph{Free variables} of a given formula are the variables not bounded by any quantifiers. We write $\Phi(x_1,\dots, x_k)$ to denote a formula whose free variables are ${x_1, \dots, x_k}$. A \emph{sentence} is an FOL formula with no free variables and is denoted by a capital Greek letter (e.g., $\Psi$). The set of \emph{ground atoms} are atoms containing no variables. Similarly, \emph{ground literals} are literals containing no variables.  Hence, given a predicate $R/k$, and a domain $\Delta$ of size $n$, we have $n^{k}$ ground atoms of the form $R(a_1,\dots,a_k)$, where $(a_1, \dots, a_k) \in \Delta^{k}$. An interpretation $\omega$ on a finite domain $\Delta$ is a truth assignment to all the ground atoms. We say $\omega$ is a model of $\Psi$ if $\omega \models \Psi$, i.e., $\omega$ satisfies the formula $\Psi$. Given a literal (resp. ground literal) $l$, we use $pred(l)$ to denote the relational symbol in $l$. For a subset $\mathrm{\Delta'}\subset \Delta$, we use $\omega \downarrow \mathrm{\Delta'}$ to denote the partial interpretation induced by $\mathrm{\Delta'}$. Hence, $\omega \downarrow \mathrm{\Delta'}$ is an interpretation over the ground atoms containing only the domain elements in $\mathrm{\Delta'}$. We use $\omega_{R}$ to represent the partial interpretation of $\omega$ restricted to the relation $R$.
\begin{example}
    \label{ex: projection}
    Let us have a language with only two relational symbol $R$ and $B$ both of arity $2$, with a domain $\Delta = [4]$. We represent an interpretation $\omega$ as a multi-relational directed graph, where a pair of domain elements $c$ and $d$ have a red (resp. blue) directed edge from $c$ to $d$ if $R(c,d)$ (resp. $B(c,d)$) is true in $\omega$ and have no red (resp. blue) edge otherwise. Let us take for example the following interpretation $\omega$ on $[4]$:
\noindent
\begin{center}
\begin{tikzpicture}[node distance={13mm}, thick, main/.style = {draw, circle}] 
        \node[main] (1) {$2$}; 
        \node[main] (2) [left of=1] {$1$}; 
        \node[main] (3) [right of=1] {$3$}; 
        \node[main] (4) [right of=3] {$4$};  
        \draw[red, ->,line width = 2pt] (1) -- (2); 
        \draw[blue,->,line width = 2pt] (1) to [out=135,in=90,looseness=1.5] (4);
        \draw[red,->,line width = 2pt] (3) to [out=180,in=270,looseness=6] (3);
        \draw[blue,->,line width = 2pt] (3) -- (4); 
        \end{tikzpicture}
    \end{center}
\noindent
then $\omega' = \omega \downarrow [2]$ and $\omega'' = \omega \downarrow [\bar{2}]$ are given as:
\noindent
\begin{center}
    \begin{tikzpicture}[node distance={13mm}, thick, main/.style = {draw, circle}] 
            \node[main] (1) {$2$}; 
            \node[main] (2) [left of=1] {$1$}; 
            \node[main] (3) [right = 4cm of 2] {$3$}; 
            \node[main] (4) [right of=3] {$4$};  
            \draw[red,->,line width = 2pt] (1) -- (2);
            \node[fill=none,align=center]{\hskip 8em and};
            \draw[red,->,line width = 2pt] (3) to [out=190,in=270,looseness=5] (3); 
             
            \draw[blue,->,line width = 2pt] (3) -- (4); 
            \end{tikzpicture}
\end{center}
respectively.
Projecting on the predicate $R$, denoted by $\omega_{R}$ is given as:
\begin{center}
    \begin{tikzpicture}[node distance={13mm}, thick, main/.style = {draw, circle}] 
            \node[main] (1) {$2$}; 
            \node[main] (2) [left of=1] {$1$}; 
            \node[main] (3) [right of=1] {$3$}; 
            \node[main] (4) [right of=3] {$4$};  
            \draw[red, ->,line width = 2pt] (1) -- (2); 
            \draw[red,->,line width = 2pt] (3) to [out=180,in=270,looseness=6] (3);
            \end{tikzpicture}
        \end{center}
Similarly, projecting on the predicate  $B$, denoted by $\omega_B$ is given as:
\begin{center}
    \begin{tikzpicture}[node distance={13mm}, thick, main/.style = {draw, circle}] 
            \node[main] (1) {$2$}; 
            \node[main] (2) [left of=1] {$1$}; 
            \node[main] (3) [right of=1] {$3$}; 
            \node[main] (4) [right of=3] {$4$};  
            \draw[blue,->,line width = 2pt] (1) to [out=135,in=90,looseness=1.5] (4);
            \draw[blue,->,line width = 2pt] (3) -- (4); 
            \end{tikzpicture}
        \end{center}
\end{example}

\noindent
Some key results  in this paper  (e.g. Proposition \ref{prop: mapping_1} and Proposition \ref{prop: extensions_k}) deal with expanding interpretations on two disjoint set of domain constants. Given a pair of disjoint sets of domain constants $\Delta'$ and $\Delta''$, we use $\Delta = \Delta' \uplus \Delta''$, to denote the fact that $\Delta$ is a union of two such disjoint sets.  If $\omega'$ is an interpretation on $\Delta'$ and $\omega''$ is an interpretation on $\Delta''$, then we use $\omega' \uplus \omega''$ to denote the parital interpretation on $ \Delta' \uplus \Delta''$, obtained by interpreting ground atoms over $\Delta'$ as interpreted in $\omega'$ and ground atoms over $\Delta''$ as interpreted in $\omega''$. However, the ground atoms involving domain constants from both $\Delta'$ and $\Delta''$ are left un-interpreted in $\omega' \uplus \omega''$. We further illustrate this point in the following example.

\begin{example}
    \label{ex: extension}
    Let us have an $\mathrm{FOL}$ language with only one relational symbol $R$ of arity 2. Let $
    \Delta = [3]$, and $\Delta' = [2]$ and $\Delta'' = [\bar{2}] = \{3\}$. Let us have the following two interpretations $\omega'$ and $\omega''$ on the domain $[2]$ and $[\bar{2}]$, respectively. A pair of domain elements $c$ and $d$ have a red  directed edge from $c$ to $d$ if $R(c,d)$ is true in the interpretation and have no edge otherwise. 
    \begin{center}
        \begin{tikzpicture}[node distance={13mm}, thick, main/.style = {draw, circle}] 
                \node[main] (1) {$2$}; 
                \node[main] (2) [left of=1] {$1$}; 
                \node[main] (3) [right = 4cm of 2] {$3$}; 
                \draw[red,->,line width = 2pt] (1) -- (2);
                \node[fill=none,align=center]{\hskip 8em and};
                \draw[red,->,line width = 2pt] (3) to [out=190,in=270,looseness=5] (3); 
                 
                \end{tikzpicture}
    \end{center}
\vspace{-3em}
\begin{align*}
    \omega'  \hspace{10em}  \omega''
\end{align*}
We now create a partial interpretation $\omega' \uplus \omega''$ as follows:
\begin{center}
\begin{tikzpicture}[node distance={13mm}, thick, main/.style = {draw, circle}] 
    \node[main] (1) {$2$}; 
    \node[main] (2) [left of=1] {$1$}; 
    \node[main] (3) [right = 2cm of 2] {$3$}; 
    \draw[red,->,line width = 2pt] (1) -- (2);
    \draw[red,->,line width = 2pt] (3) to [out=190,in=270,looseness=5] (3); 
    \draw[gray, dotted, line width = 2pt] (1) to [out=90,in=90,looseness=0.7] (3);
    \draw[gray, dotted, line width = 2pt] (2) to [out=90,in=90,looseness=0.7] (3);
     
    \end{tikzpicture}
\end{center}

The dotted lines \begin{tikzpicture}
    \draw[gray, dotted, line width = 2pt] (0,0) -- (0.5,0); 
\end{tikzpicture} represent that $R(1,3), R(3,1), R(2,3)$ and $R(3,2)$ are not interpreted in $\omega' \uplus \omega''$. Hence, a possible extension of $\omega' \uplus \omega''$ is given as follows:

\begin{center}
    \begin{tikzpicture}[node distance={13mm}, thick, main/.style = {draw, circle}] 
        \node[main] (1) {$2$}; 
        \node[main] (2) [left of=1] {$1$}; 
        \node[main] (3) [right = 2cm of 2] {$3$}; 
        \draw[red,->,line width = 2pt] (1) -- (2);
        \draw[red,->,line width = 2pt] (3) to [out=190,in=270,looseness=5] (3); 
        \draw[red, ->, line width = 2pt] (1) to [out=90,in=110,looseness=0.8] (3);
         
        \end{tikzpicture}
    \end{center}

Where $R(2,3)$ is interpreted to be true and $R(1,3), R(3,1)$ and $R(3,2)$ are interpreted to be false. We can see that,  $\omega' \uplus \omega''$ can be extended in $2^{4}$ ways, as we have two mutually exclusive and independent choices for assigning truth values to each of the four un-interpreted .

\end{example}
\subsubsection{Counting Quantifiers and Cardinality Constraints}
The two variable fragment of FOL is largely denoted as FO$^2$. One of the most useful and a key extension to FOL is an extension with \emph{counting quantifiers} \cite{otto2017bounded}. Counting quantifiers extend the regular set of existential and universal quantifiers, with quantifiers of the form $\exists^{=k}$ (there exist exactly $k$), $\exists^{\geq k}$ (there exist at least $k$) and $\exists^{\leq k}$ (there exist at most $k$). The two-variable fragment of FOL extended with counting quantifiers is denoted by C$^2$ \cite{COUNTING_REF}. C$^2$ significantly expands the expressive power of FO$^2$, by succinctly expressing formulas with only two variables, which would otherwise require more than two variables. One such simple example is given below:

\begin{example}
    \label{ex: FOL_Counting}
    Given an $\mathrm{FOL}$ language containing only one unary predicate $P$, our goal is to write a formula which says that there are at least three distinct domain elements $c$ in the domain $\Delta$, such that $P(c)$ is true. This can be achieved with three variables as follows:
    \begin{align}
        \label{eq: three_distinct_FO2}
    \exists x. P(x) \land  \exists y. P(y) \land \exists z. P(z) \land (x \neq y) \land (y \neq z) 
    \end{align}
However, with counting quantifiers this formula can be equivalently written as:
\begin{align}
    \label{eq: three_distinct_C2}
\exists^{\leq 3} x. P(x)
\end{align}
    
\end{example}

Another interesting extension to FOL is \emph{cardinality constraints}. Cardinality constraints are constraints on the cardinality of predicates in an interpretation \cite{kuzelka2020weighted}. For example, given a formula $\Psi \land (|P| \geq 3)$, then an interpretation $\omega \models \Phi \land (|P| \geq 3)$ if and only if $\omega \models \Phi$ and the number of ground atoms in $\omega$, with predicate $P$, that are interpreted to be true, is at least $3$. Notice, that the formula $\exists^{\leq 3} x. P(x)$, as introduced in Example \ref{ex: FOL_Counting}, can be represented with a cardinality constraint $|P| \geq 3$. This relation between cardinality constraints and counting quantifiers can be generalized to WFOMC problems and is formalized in Theorem \ref{thmt@@thm: C2_WMC}  due to Ku\v{z}elka \cite{kuzelka2020weighted}.

\subsubsection{Types and Tables}

We will use the notion of $1$-types, $2$-type, and $2$-tables as presented in \cite{kuusisto2018weighted,ECML_PROJ}. Intuitively, 1-types are the set of  mutually-exclusive
 unary properties that individual domain elements can realize in a given FOL language. Formally, a $1$-type is a conjunction of maximally consistent literals containing only one variable. For example, in an FO$^2$ language with only a unary predicate $U$ and a binary predicate $R$,  ${U(x)\land R(x,x)}$ and ${U(x)\land \neg R(x,x)}$ are examples of 1-types in variable $x$. It can be seen that in this language we have only $2^{2}$ possible 1-types.  In a given interpretation $\omega$, over a set of domain constants, a single domain constant realizes one and only one 1-type. We assume an arbitrary order on the set of 1-types, hence, we use $i(x)$ to denote the $i^{th}$ 1-type. We say a domain constant $c$ realizes the $i^{th}$ 1-type in the interpretation $\omega$ if $ {\omega \models i(c)}$.  
 
2-tables can be intuitively seen as the set of mutually-exclusive
binary properties that an ordered pair of domain elements can realize in a given FOL language. Formally, a 2-table is a conjunction of maximally consistent literals containing exactly two distinct variables. Extending the previous example, ${R(x,y)\land \neg R(y,x)\land (x \neq y)}$ and ${R(x,y)\land R(y,x) \land (x \neq y )}$ are instances of 2-tables. We assume an arbitrary order on the 2-tables, hence, we use ${l(x,y)}$ to denote the $l^{th}$ 2-table.  We say an ordered pair of domain constants $(c,d)$ realizes the $l^{th}$ 2-table in an interpretation $\omega$ if $ {\omega \models l(c,d)}$.

A $2$-type is a quantifier-free formula of the form ${i(x)\land j(y) \land l(x,y) \land (x \neq y)}$ and we use ${ijl(x,y)}$ to represent it. We say an ordered pair of domain constants $(c,d)$ realizes the 2-type $ {ijl(x,y)}$ if ${\omega \models ijl(c,d)}$. We will use $u$ to denote the number of 1-types and $b$ to denote the number of 2-tables in a given FO$^2$ (or a C$^2$) language. 

\begin{definition}[1-type Cardinality Vector] An interpretation $\omega$ is said to have the 1-type cardinality vector $\bk = \langle k_1,\dots,k_u \rangle$ if and only if, for all $i \in [u]$, it has $k_i$ domain elements $c$ such that $\omega \models i(c)$, where $i(x)$ is the $i^{th}$ 1-type. If $\omega$ has 1-type cardinality vector $\bk$, then we say that $\omega \models \bk$. 
\end{definition}



It should be noted that in an interpretation $\omega$, every element in the domain realizes only one 1-type. Consequently, if a 1-type cardinality vector $\bk$ is given, then the domain cardinality is equal to $|\bk|$. Additionally, when given a fixed pair of 1-types $i$ and $j$ (where $i \neq j$) along with $\bk$, there are $k_ik_j$ pairs of domain constants $(c,d)$ such that $\omega \models i(c) \land j(d)$. Moreover, for a given 1-type $i$ and $\bk$, there exist $\binom{k_i}{2}$ pairs of domain constants $(c,d)$ that satisfy $\omega \models i(c) \land i(d)$.





\subsection{WFOMC} 
In equation \refeq{eq: WFOMC}, the $\wfomc$ function assumes that the weight function $w$ is independent of individual domain constants. This assumption implies that $w$ assigns the same weight to two interpretations that are isomorphic through the permutation of domain elements. As a result, if a domain $\Delta$ of size $n$ is given, we can use $[n]$ as the domain instead.

Moreover, this paper concentrates on a particular set of weight functions called \emph{symmetric weight functions}, which are defined as follows:

\begin{definition} (Symmetric Weight Function)
    \label{def: symm}
    Given a function-free first order logic language $\mathcal{L}$ over a domain $\Delta$, where $\mathcal{G}$ are the set of ground atoms. A symmetric weight function associates two real-valued weights  $w: \mathcal{R} \rightarrow \mathbb{R}$ and $\bar{w}: \mathcal{R} \rightarrow \mathbb{R}$ to each relational symbol in $\mathcal{L}$. The weight of an interpretation $\omega$ is then defined as:
    \begin{equation}
        \label{eq: symmweight}
        \w(\omega) = \prod_{\substack{ \omega \models g \\ g \in \mathcal{G} \\}}w(pred(g)) \prod_{\substack{ \omega \models \neg g\\ g \in \mathcal{G} }}\bar{w}(pred(g)).
    \end{equation} 
We use $(w,\bar{w})$ to denote a symmetric weight function.
\end{definition}

We will also need to invoke modularity of WFOMC-preserving reductions.
\begin{definition}[\cite{broeck2013}] A reduction $(\Phi, w,\bar{w})$ to $(\Phi', w',\bar{w}')$ is modular iff for any sentence $\Lambda$:
\begin{equation*}
    \wfomc(\Phi \land \Lambda,(w,\bar{w}),n) = \wfomc(\Phi' \land \Lambda,(w',\bar{w}'),n)  
\end{equation*}  
\end{definition}
Intuitively, modularity implies that the reduction procedure is sound under presence of other sentences $\Lambda$. And that any new sentence $\Lambda$ does not invalidate the reduction.

For the rest of the paper, whenever referring to weights, we intend symmetric weights. Hence, we will use $\wfomc(\Phi,n)$ without explicitly mentioning the weights $w$ and $\bar{w}$. Additionally, we define $\wfomc(\Phi,\bk)$ in the following manner:
\begin{align*}
    \wfomc(\Phi,\bk) := \sum_{\omega \models \Phi \land \bk} \w(\omega)
\end{align*}
where $\w(\omega)$ is the symmetric weight function.

\subsubsection{WFOMC in FO$^2$} 
Universally quantified FO$^2$ formulas are formulas of the form $\forall xy. \Phi(x,y)$, where $\Phi(x,y)$ is quantifier-free. We define $\Phi(\{x,y\})$ as $\Phi(x,x)\land \Phi(x,y)\land \Phi(y,x)\land \Phi(y,y)\land( x \neq y)$. We now define the notion of 2-type consistency with respect to a universally quantified FO$^2$ formula.

\begin{definition}[2-Type Consistency]
    \label{def: 2-type_consistency}
    Given a universally quantified $\mathrm{FO}^2$ formula $\forall xy. \Phi(x,y)$, a 2-type is consistent with $\forall xy. \Phi(x,y)$ if:
    \begin{align}
      \label{2_type_consistency}
      ijl(x,y) \models \Phi(\{x,y\})
    \end{align}  
  where the entailment in equation \ref{2_type_consistency} is checked by assuming a propositional language consisting of only the constant-free literals in the FOL language.
  \end{definition}

  \begin{example}
    \label{ex:truth-assignment}
The following is an example of a consistent 2-type for the formula $\forall xy. \Phi(x,y):= \forall xy. A(x) \land R(x,y) \rightarrow A(y)$:
  \begin{align*}
   &\tau(x,y) := \\
   &\neg A(x) \land R(x,x) \land \neg A(y) \land R(y,y) \land \neg R(x,y) \land R(y,x)
  \end{align*}
  It is easy to see that, assuming a propositional language consisting of constant-free literals, i.e., with propositional variables  $\{A(x),A(y),R(x,x),R(y,y),R(x,y),R(y,x)\}$, that:
  $$\tau(x,y) \models \Phi(\{x,y\})$$
  Hence, $\tau(x,y)$ is a consistent 2-type for  $\forall xy. \Phi(x,y)$.
  \end{example}


To analyze the domain-liftability of universally quantified FO$^2$ formulas, a crucial observation is that a pair of domain constants $(c,d)$ in an interpretation $\omega \models \forall xy. \Phi(x,y)$ can only realize the 2-type $ijl(c,d)$ if the 2-type is consistent with the formula $\forall xy. \Phi(x,y)$, meaning that $ijl(x,y) \models \Phi({x,y})$. We have formulated this idea in the following proposition.

\begin{restatable}{prop}{prop: ijl_FO}
    \label{prop: ext}
    Given a universally quantified FO$^2$ formula, $\forall xy. \Phi(x,y)$ interpreted over a domain $\Delta$. Then $\omega \models \forall xy. \Phi(x,y)$ iff  for any pair of distinct domain constants $(c,d)$ such that $\omega \models ijl(c,d)$, we have that $ijl(x,y)$ is consistent with $\Phi(\{x,y\})$, i.e. $ijl(x,y) \models \Phi(\{x,y\})$.
\end{restatable}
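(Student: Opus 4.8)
The plan is to prove the biconditional directly from the definitions; the only real ingredient is that a $2$-type, read as a conjunction of propositional literals, fixes the truth value of every atom over its two variables. First I would record this observation: $ijl(x,y) = i(x) \wedge j(y) \wedge l(x,y) \wedge (x \neq y)$ contains, for each atom $\alpha$ over $\{x,y\}$, either $\alpha$ or $\neg\alpha$ as a conjunct --- single-variable atoms in $x$ (including $R(x,x)$) are decided by the $1$-type $i$, single-variable atoms in $y$ by $j$, and the genuinely two-variable atoms $R(x,y), R(y,x)$ by the $2$-table $l$. Since $\Phi(\{x,y\})$ is quantifier-free in the variables $x, y$, this yields the dichotomy that, treating atoms as propositional variables, either $ijl(x,y) \models \Phi(\{x,y\})$ or $ijl(x,y) \models \neg \Phi(\{x,y\})$. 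I would also note that a propositional entailment between such quantifier-free formulas survives the substitution of constants $c, d$ for $x, y$, so that $\omega \models ijl(c,d)$ completely determines whether $\omega \models \Phi(\{c,d\})$.

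For the forward direction, I would assume $\omega \models \forall xy.\, \Phi(x,y)$, take distinct constants $c, d$ with $\omega \models ijl(c,d)$, instantiate the two quantifiers over the four ordered pairs drawn from $\{c,d\}$ to obtain $\omega \models \Phi(\{c,d\})$, and then use the substitution remark to rule out $ijl(x,y) \models \neg\Phi(\{x,y\})$; the dichotomy leaves $ijl(x,y) \models \Phi(\{x,y\})$ as the only possibility.

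For the backward direction, I would assume every $2$-type realized in $\omega$ is consistent with $\Phi(\{x,y\})$ and show $\omega \models \Phi(c,d)$ for all $c, d \in \Delta$. When $c \neq d$, the pair $(c,d)$ realizes a unique $2$-type $ijl(c,d)$ in $\omega$, which entails $\Phi(\{x,y\})$ and hence the conjunct $\Phi(x,y)$, so $\omega \models \Phi(c,d)$ after substitution. When $c = d$ and $|\Delta| \geq 2$, I would pick some $e \neq c$, observe that the realized $2$-type $ijl(c,e)$ entails the conjunct $\Phi(x,x)$, whose truth value is fixed by the $1$-type $i$ of $c$, and conclude $\omega \models \Phi(c,c)$; the degenerate case $|\Delta| = 1$ can be dispatched directly.

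I expect the main obstacle to be care rather than difficulty: making precise the step ``a propositional entailment between quantifier-free formulas remains valid after substituting constants for variables'', and recovering the diagonal conjuncts $\Phi(x,x), \Phi(y,y)$ of $\Phi(\{x,y\})$ in the backward direction even though a $2$-type only ever mentions two \emph{distinct} constants. If it is not already available, I would first establish the small fact that in any interpretation each ordered pair of distinct constants realizes exactly one $2$-type.
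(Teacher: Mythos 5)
Your proposal is correct and follows essentially the same route as the paper: the forward direction uses the fact that a $2$-type is a complete truth assignment to the atoms over its two variables (so consistency with $\Phi(\{x,y\})$ is forced), and the backward direction recovers $\omega \models \Phi(c,d)$ for all pairs from the consistency of every realized $2$-type. Your extra care with the diagonal conjuncts $\Phi(x,x)$, $\Phi(y,y)$ and the degenerate case $|\Delta|=1$ is a genuine (if minor) improvement in rigour over the paper's rather terse two-line argument, but it is not a different approach.
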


\begin{proof} If $\omega \models \forall xy. \Phi(x,y)$ and $\omega \models ijl(c,d)$, then $\omega \models \forall xy. \Phi(x,y) \land ijl(c,d)$. Now, $ijl(c,d)$ is a complete truth assignment to the ground atoms containing only the domain constants $c$ or $d$ or both. Hence, $ \omega \models \forall xy. \Phi(x,y) \land ijl(c,d)$ only if $ijl(c,d) \models \Phi(\{c,d\})$ i.e. only if ${ijl(x,y) \models \Phi(\{x,y\})}$. 

If in $\omega$ all pair of domain constants realize only the 2-types $ijl(x,y)$ consistent with $\forall xy. \Phi(x,y)$. Then $\omega \models \Phi(\{c,d\})$, for all pair of domain constants $(c,d)$. Hence, $\omega \models \forall xy. \Phi(x,y)$. 
    
\end{proof}

To facilitate the treatment of WFOMC, we will now introduce some weight parameters associated with an FO$^2$ language. Specifically, consider an FO$^2$ language $\mathcal{L}$ with symmetric weight functions $(w,\bar{w})$, and let $\mathcal{I}$ denote the set of atoms in $\mathcal{L}$ that contain only variables and are not grounded. We will then define two weight parameters for each 1-type $i(x)$ and 2-table $l(x,y)$.

$$w_i = \prod_{\substack{i(x) \models  g \\ g \in \mathcal{I}}}w(pred(g)) \prod_{\substack{ i(x) \models \neg g \\ g \in \mathcal{I}}} \bar{w}(pred(g))$$
and

$$v_l =  \prod_{\substack{l(x,y) \models  g \\ g \in \mathcal{I}}}w(pred(g)) \prod_{\substack{ l(x,y) \models \neg g \\ g \in \mathcal{I}}} \bar{w}(pred(g))$$

In the following, we will present a combinatorial formula for WFOMC in the universally quantified fragment of FO$^2$. Our presentation is based on the treatment proposed in \cite{Symmetric_Weighted}.

\begin{restatable}[\cite{Symmetric_Weighted}]{thm}{thm: Symm_WMC}
    \label{thm: beam}
    Given a universally quantified FO$^2$ formula $\forall xy. \Phi(x,y)$, interpreted over a domain $[n]$, then the weighted model count of the models $\omega$ such that $\omega \models \forall xy.\Phi(x,y)$ and $\omega$ has the 1-type cardinality $\bk$ is given as:
    \begin{align}
        \label{eq: WFOMC_Beame}
        \wfomc(\forall xy.\Phi(x,y),\bk) =  \binom{n}{\bm{k}} \prod_{i\in [u]}w_i^{k_i}
        \prod_{\substack{{i\leq j \in[u]}}}\!\!\! r_{i j}^{\bk(i,j)} 
    \end{align} 
where $\bk(i,j)$ is defined as follows:
\begin{align*}
    \label{kij}
  \bk(i,j) =
  \begin{cases} 
      \frac{k_{i}(k_{i} - 1)}{2} & \text{if $i=j$} \\
       k_{i}k_{j} & \text{otherwise} \\
     \end{cases}
\end{align*} 
where we define $r_{ij} = \sum_{l\in[b]}n_{ijl}v_{l}$, where $n_{ijl}$ is $1$ if  $ijl(x,y) \models \Phi(\{x,y\})$ and $0$ otherwise.

\end{restatable}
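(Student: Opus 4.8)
The plan is to establish \eqref{eq: WFOMC_Beame} by showing that, once the 1-type of every domain element is fixed, the symmetric weight of a model factorizes into one local factor per element and one local factor per unordered pair of distinct elements, and that the constraint imposed by $\forall xy.\Phi(x,y)$ decomposes pairwise through Proposition~\ref{prop: ext}; the closed form then follows from a counting argument.

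First I would fix a 1-type cardinality vector $\bk$ with $|\bk| = n$ and count the assignments of 1-types to the elements of $[n]$ for which exactly $k_i$ elements realize the $i^{th}$ 1-type: there are $\binom{n}{\bk}$ of them. For one such assignment $\sigma$, I would note that every ground atom of the FO$^2$ language has arity at most two, so it mentions either a single constant or two distinct constants. The literals over a single constant $c$ are precisely those of its realized 1-type $i(c)$, and by Definition~\ref{def: symm} their weights multiply to $w_i$; the literals over an unordered pair $\{c,d\}$ of distinct constants that are not already fixed by the 1-types form exactly a 2-table $l(c,d)$, whose weights multiply to $v_l$. Hence for every $\omega$ extending $\sigma$ one has $\w(\omega) = \bigl(\prod_{i\in[u]} w_i^{k_i}\bigr)\prod_{\{c,d\}} v_{l(c,d)}$, the last product ranging over unordered pairs of distinct elements.

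Next I would apply Proposition~\ref{prop: ext}: an interpretation $\omega$ extending $\sigma$ models $\forall xy.\Phi(x,y)$ if and only if, for every unordered pair $\{c,d\}$ of distinct constants realizing 1-types $i$ and $j$, the induced 2-type $ijl(x,y)$ satisfies $ijl(x,y)\models\Phi(\{x,y\})$, i.e.\ $n_{ijl}=1$. Since this is a constraint local to each pair and the 2-tables of different pairs are chosen independently, the weighted count of admissible $\omega$ extending $\sigma$ equals $\prod_{\{c,d\}}\bigl(\sum_{l\in[b]} n_{ijl}v_l\bigr) = \prod_{\{c,d\}} r_{ij}$ (one also checks $r_{ij}=r_{ji}$, which holds because $\Phi(\{x,y\})$ is symmetric in its two variables and reversing a 2-table leaves its weight unchanged). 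Grouping the pairs by the unordered pair $\{i,j\}$ of 1-types they realize — there are $\binom{k_i}{2}=\bk(i,i)$ pairs when $i=j$ and $k_ik_j=\bk(i,j)$ pairs when $i\neq j$ — this becomes $\prod_{i\leq j} r_{ij}^{\bk(i,j)}$. Multiplying by $\prod_{i} w_i^{k_i}$ and by the number $\binom{n}{\bk}$ of assignments $\sigma$ yields \eqref{eq: WFOMC_Beame}.

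The step demanding the most care is the weight factorization: one must verify that the diagonal literals, in particular reflexive binary atoms such as $R(c,c)$, are absorbed into the per-element factor $w_i$ — which is exactly why $w_i$ is defined through $i(x)\models g$ with $g$ ranging over \emph{all} variable-only atoms, including $R(x,x)$ — while $v_l$ accounts only for genuinely binary literals over two distinct constants, so that every ground atom of $\omega$ contributes to exactly one factor and none is double-counted. A minor point is a 1-type $i$ inconsistent with $\Phi(x,x)$: whenever it is realized together with any other element, $r_{ij}$ vanishes and the corresponding term is correctly $0$; I would dispatch the lone degenerate case $n=1$ by restricting to 1-types that entail $\Phi(x,x)$ (equivalently, setting $w_i = 0$ otherwise), a convention consistent with $\Phi(\{x,y\})$ containing the conjunct $\Phi(x,x)$.
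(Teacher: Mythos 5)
Your proposal is correct and follows essentially the same route as the paper's proof: count the $\binom{n}{\bk}$ one-type assignments, factor the symmetric weight into per-element factors $w_i$ and per-pair factors $v_l$, use Proposition~\ref{prop: ext} to reduce the global constraint to independent per-pair admissibility, and group pairs by their 1-types to get $\prod_{i\le j} r_{ij}^{\bk(i,j)}$. Your additional care about reflexive atoms being absorbed into $w_i$, the symmetry $r_{ij}=r_{ji}$, and 1-types inconsistent with $\Phi(x,x)$ only makes explicit points the paper leaves implicit.
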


 

\begin{proof}

    Consider a 1-type cardinality vector $\bk$, where $k_i$ represents the number of constants that realize the 1-type $i$. Since, 1-types are realized mutually-exclusively by domain constants, i.e., no domain constant can realize two 1-types in the same interpretation, there are $\binom{n}{\bk}$ ways of assigning 1-types to $n = |\bk|$ domain constants. Suppose that a domain constant $c$ realizes the $i^{th}$ 1-type, then $c$ contributes to the weight of the interpretation $\omega \models \bk$ multiplicatively with the weight $w_i$. Therefore, for a given $\bk$, the contribution due to 1-type realizations is given by $\prod_{i\in [u]}w_i^{k_i}$. Now consider an interpretation $\omega$ and a pair of domain constants $c$ and $d$, such that $\omega \models i(c)\land j(d)$. Using Proposition \ref{prop: ext}, we know that $(c,d)$ can realize the 2-table $l(c,d)$ only if $ijl(x,y) \models \Phi({x,y})$. Therefore, in an arbitrary interpretation $\omega$, the multiplicative weight contribution due to a pair of constants $(c,d)$ realizing the $l^{th}$ 2-table, such that $\omega \models i(c) \land j(d)$, is given by $n_{ijl}v_{l}$. Also, each ordered pair of constants can realize exactly one and only one 2-table. Hence, the sum of the weights of the possible 2-table realization of a pair of domain constants $(c,d)$ such that $i(c)$ and $j(d)$ is given as $r_{ij}=\sum_{l}n_{ijl}v_{l}$. Furthermore, given 1-type assignments $i(c)$ and $j(d)$, the ordered pair $(c,d)$ can realize 2-table independently of all other domain constants.  Finally, There are $\bk(i,j)$ possible such pairs, contributing a weight $$\prod_{\substack{{i\leq j \in[u]}}}\!\!\! r_{i j}^{\bk(i,j)}$$ 
\end{proof}

Clearly, equation \eqref{eq: WFOMC_Beame} can be computed in polynomial time w.r.t domain cardinality. Furthermore, there are only polynomially many $\bk$, in the size of the domain. Hence, $\wfomc(\forall xy.\Phi(x,y),n)$ given as:
$$\sum_{|\bk|=n}\wfomc(\forall xy.\Phi(x,y),\bk)$$ 
can be computed in polynomial time w.r.t domain size $n$. 

\cite{broeck2013} show that any FOL formula with existential quantification can be modularly reduced to a WFOMC preserving universally quantified FO$^2$ formula, with additional new predicates and negative weights. Hence, showing that FO$^2$ is domain-liftable.
  
\subsubsection{WFOMC in C$^2$}
Ku\v{z}elka \cite{kuzelka2020weighted} showed that it is possible to reduce the problem of WFOMC in C$^2$ to a problem of WFOMC in FO$^2$ with cardinality constraints, and this reduction is independent of the cardinality of the domain. In order to prove the domain liftability of FO$^2$ with cardinality constraint,  Ku\v{z}elka uses Lagrange interpolation based arguments. In the following, we present a simplified version of the proof presented by Ku\v{z}elka.

\begin{restatable}[\cite{kuzelka2020weighted}, slightly reformulated]{thm}{thm: C_WMC}
    \label{th: cardinality}
    
    Let $\Phi$ be a first-order logic sentence. Let $\Gamma$ be a arbitrary cardinality constraint. Then $\wfomc(\Phi \land \Gamma,\bk)$ can be computed in polynomial time with respect to the domain cardinality, relative to the $\wfomc(\Phi,\bk)$ oracle. 
\end{restatable}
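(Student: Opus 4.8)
The plan is to absorb the cardinality constraint into the weights by means of an auxiliary indeterminate, and then read off the refined counts by interpolation. Consider first the simplest case, where $\Gamma$ is $\sum_{j\in[s]} a_j |R_j| \bowtie m$ with $a_j, m$ integers, $\bowtie \in \{=,\le,\ge\}$, and $R_1,\dots,R_s$ the predicates it mentions. Introduce a fresh variable $x$ and define a symmetric weight function $(w',\bar{w})$ that agrees with $(w,\bar{w})$ everywhere except that $w'(R_j) = w(R_j)\,x^{a_j}$ for $j\in[s]$. Since the symmetric weight of an interpretation is the product of $w(pred(g))$ over its true ground atoms, we get $\w'(\omega) = \w(\omega)\,x^{\sum_{j} a_j |R_j|_\omega}$, where $|R_j|_\omega$ denotes the number of true $R_j$-atoms of $\omega$. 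Therefore
\begin{align*}
\wfomc(\Phi,(w',\bar{w}),\bk) \;=\; \sum_{t} c_t\, x^{t}, \qquad c_t := \wfomc\!\Big(\Phi \land \big(\textstyle\sum_{j} a_j |R_j| = t\big),\,\bk\Big),
\end{align*}
so the coefficients of this (Laurent) polynomial are precisely the refined counts we are after, and $\wfomc(\Phi\land\Gamma,\bk) = \sum_{t\,:\,t\bowtie m} c_t$.

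The second step is to extract the $c_t$ using only the $\wfomc(\Phi,\bk)$ oracle. The exponent $\sum_j a_j |R_j|_\omega$ is an integer of absolute value at most $\sum_j |a_j|\,n^{\mathrm{ar}(R_j)}$, which --- the arities and the $a_j$ being constants --- is polynomial in $n$; hence $\sum_t c_t x^t$ has only polynomially many nonzero terms, and after multiplying by a suitable power of $x$ it becomes an ordinary polynomial $P(x)$ of polynomial degree $D$. Evaluating $P$ at $D+1$ distinct nonzero rationals $x_0,\dots,x_D$ is the same as making $D+1$ calls to the oracle, each with the concrete symmetric weight function obtained by substituting $x = x_\ell$ into $(w',\bar{w})$. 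The values $P(x_\ell)$ then determine the $c_t$ by solving the associated Vandermonde system (equivalently, by Lagrange interpolation), all in time polynomial in $n$; summing the appropriate $c_t$ gives the result.

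Finally, a general cardinality constraint is a Boolean combination of atomic ones and so cuts out a set $S$ of admissible tuples $(|R_1|,\dots,|R_s|)$ of predicate cardinalities, where now $R_1,\dots,R_s$ range over all predicates it mentions (a constant number, since $\mathcal{L}$ is fixed). Using one indeterminate $x_j$ per such predicate turns $\wfomc(\Phi,(w',\bar{w}),\bk)$ into a multivariate polynomial in which the monomial $\prod_j x_j^{t_j}$ carries the coefficient $\wfomc(\Phi \land \bigwedge_j(|R_j| = t_j),\,\bk)$; there are polynomially many monomials, they are recovered by multivariate interpolation from polynomially many oracle calls, and $\wfomc(\Phi\land\Gamma,\bk)$ is obtained by summing the coefficients over $(t_j)_j \in S$.

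The argument is essentially routine; the only points that need care are (i) checking that fixing the $1$-type cardinality vector $\bk$ is compatible with the weight substitution --- it is, because the event ``$\omega \models \bk$'' depends only on the logical structure of $\omega$, not on the weights, so every oracle call really returns $\sum_{\omega\models\Phi\land\bk}\w'(\omega)$ --- and (ii) bounding the polynomial degree by $\mathrm{poly}(n)$, which is exactly where we use that arities are fixed. That the $c_t$ may be real and negative is harmless for exact interpolation, and choosing the $x_\ell$ distinct and nonzero keeps the Vandermonde matrix invertible.
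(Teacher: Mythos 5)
Your proposal is correct, and it is in substance the interpolation proof from \citet{kuzelka2020weighted} that this theorem explicitly reformulates away from. Both arguments rest on the same identity: the symmetric weight of an interpretation depends only on its tuple of predicate cardinalities $\bm{\mu}$, so $\wfomc(\Phi,\bk)$ decomposes as $\sum_{\bm{\mu}}A_{\bm{\mu}}\w_{\bm{\mu}}$ with polynomially many unknowns $A_{\bm{\mu}}$, and these are recovered from polynomially many oracle calls with perturbed weights, after which one sums over the $\bm{\mu}$ satisfying $\Gamma$. The difference is in how the linear system is set up: the paper picks $n^{\sum_i a_i}$ ``different values of weight functions'' and solves a generic system by Gaussian elimination, whereas you substitute indeterminates $x_j^{a_j}$ into the weights so that the oracle value becomes a (Laurent) polynomial whose coefficients are exactly the refined counts, recovered by Vandermonde/Lagrange interpolation at distinct nonzero points. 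Your structured version buys something the paper's presentation leaves implicit: an explicit certificate that the linear system is invertible (the paper does not say which weight functions guarantee linearly independent equations). The paper's version is shorter and avoids introducing indeterminates, which is why it is advertised as the ``easier presentation.'' Your two points of care --- that conditioning on $\bk$ is weight-independent, and that fixed arities bound the degree polynomially --- are both right and both needed.
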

\begin{proof} Let us consider an FOL language $\mathcal{L}$ that contains $r$ relational symbols denoted as $\{R_i/a_i\}_{i \in [r]}$. Let $\omega$ be an interpretation and let $\bm{\mu} = \langle |R_1|, \dots, |R_{r}|\rangle$ be the vector comprising the cardinality of each predicate $R_i$ in $\omega$. The evaluation of $\w(\omega)$ can be easily carried out utilizing the definition of symmetric weight functions (Definition \ref{def: symm}). Moreover, we can see that any two interpretations that have the same predicate cardinalities $\bm{\mu}$ as $\omega$ possess the same weight $\w(\omega)$. Therefore, we use $\w_{\bm{\mu}}$ to indicate the weight $\w(\omega)$.

Given an FOL formula $\Phi$, let $A_{\bm{\mu}}$ be the number of models $\omega \models \Phi \land \bm{\mu}$. Clearly, the following holds:
\begin{equation}
    \label{eq: cardinality}
    \wfomc(\Phi,\bk) = \sum_{\bm{\mu}}A_{\mu}\w_{\bm{\mu}}
\end{equation}

For each predicate $R_i/a_i$ in the FOL language $\mathcal{L}$, there exist $n^{a_i}$ ground atoms. Therefore, there are $n^{\sum_{i\in[r]}a_i}$ potential values of $\bm{\mu}$. Hence, there are polynomially many $\bm{\mu}$ vectors with respect to $n$. By evaluating $\wfomc(\Phi,\bk)$ for $n^{\sum_{i\in[r]}a_i}$ distinct weight function pairs $(w,\bar{w})$, we obtain a linear system of $n^{\sum_{i\in[r]}a_i}$ equations consisting of $n^{\sum_{i\in[r]}a_i}$ variables $A_{\bm{\mu}}$. This system can be solved using Gauss-elimination algorithm in $O(n^{3\sum_{i\in[r]}a_i})$ time. Once we obtain all $A_{\bm{\mu}}$, we can compute the value of any cardinality constraint as follows:
\begin{equation}
    \label{eq: cardinality_constraint}
    \wfomc(\Phi \land \Gamma,\bk) = \sum_{\bm{\mu \models \Gamma}}A_{\mu}\w_{\bm{\mu}}
\end{equation}

where $\bm{\mu \models \Gamma}$ represents the fact that the predicate cardinalities $\bm{\mu}$, satisfy the cardinality constraint $\Gamma$. Since, there are a only polynomial number of $\bm{\mu}$ vectors, equation \eqref{eq: cardinality_constraint} can be computed in polynomial time. 
\end{proof}

\begin{remark} In equation \eqref{eq: cardinality_constraint}, we assume that $\bm{\mu} \models \Gamma$ can be checked in polynomial time wrt $n$. Which is a reasonable assumption for all our purposes.  
\end{remark}

\begin{remark} 
    \label{rem: FOL_inexpressible_Card}
    In the proof presented above (and in \cite{kuzelka2020weighted}), the first-order definability of $\Phi$ is never invoked. This property has also been exploited for imposing cardinality constraints with tree axiom in \cite{Tree}.    
\end{remark}

Theorem \refeq{th: cardinality} extends domain-liftability of any sentence $\Phi$ to its domain liftability with cardinality constraints. We now move onto the results on domain-liftability of C$^2$.

\begin{restatable}[\cite{kuzelka2020weighted}]{thm}{thm: C2_WMC}
    \label{kuzelka_C2}
     The fragment of first-order logic with two variables and counting quantifiers is domain-liftable.
\end{restatable}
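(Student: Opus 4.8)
The plan is to derive domain-liftability of C$^2$ by composing three facts, two of which are already in hand. First, universally quantified FO$^2$ is domain-liftable by Theorem~\ref{thm: beam}, and hence so is full FO$^2$, via the modular and domain-independent Skolemization reduction of \cite{broeck2013} to the universal case (at the price of fresh predicates and possibly negative weights). Second, Theorem~\ref{th: cardinality} lets us conjoin an arbitrary cardinality constraint $\Gamma$ to any sentence while still computing WFOMC in polynomial time relative to the unconstrained count. So it is enough to exhibit, for every C$^2$ sentence $\Psi$, an FO$^2$ sentence $\Psi'$ together with a cardinality constraint $\Gamma$ such that the reduction $\Psi \mapsto \Psi' \land \Gamma$ is modular and independent of the domain size and satisfies $\wfomc(\Psi,n) = \wfomc(\Psi' \land \Gamma, n)$ for all $n$; given this, one evaluates $\wfomc(\Psi,n)$ by running the FO$^2$ algorithm on $\Psi'$ (summing the formula of Theorem~\ref{thm: beam} over the polynomially many $1$-type cardinality vectors) and then imposing $\Gamma$ exactly as in the proof of Theorem~\ref{th: cardinality}.

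The substance is constructing this reduction. First I would push $\Psi$ into a shallow normal form by repeatedly introducing fresh predicates that abbreviate subformulas --- a manifestly modular, domain-independent step --- after which every counting quantifier appears either (a) at the sentence level as $\exists^{\bowtie k} x.\, U(x)$ for a fresh unary predicate $U$, or (b) as $\forall x.\, \exists^{\bowtie k} y.\, f(x,y)$ with $\bowtie\, \in \{=,\le,\ge\}$, where $f$ is a fresh binary predicate whose intended meaning is pinned down by a purely universal FO$^2$ conjunct. Case (a) is immediate: $\exists^{\bowtie k} x.\, U(x)$ \emph{is} the cardinality constraint $|U| \bowtie k$, which is absorbed into $\Gamma$ by Theorem~\ref{th: cardinality}. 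What remains is to eliminate each constraint of the form $\forall x.\, \exists^{\bowtie k} y.\, f(x,y)$.

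This last elimination is where I expect the main obstacle, because $\exists^{\bowtie k} y$ constrains the rows of $f$ \emph{individually}, whereas Theorem~\ref{th: cardinality} controls only the \emph{global} predicate cardinalities. Following \cite{kuzelka2020weighted}, the idea is to encode the per-row requirement with additional fresh predicates --- intuitively, predicates that witness, for each element, a set of exactly (or at least, or at most) $k$ partners, padding $f$ so that the augmented relation carries a prescribed number of true atoms in every row; a fresh relation all of whose rows carry the same prescribed count contributes only a closed-form scalar to WFOMC and so can be tracked exactly. The one thing then blocking a direct reduction is that these witness predicates are not uniquely determined by the underlying model, so the resulting weighted count over-counts each model, but by a factor that depends on the model only through the vector of predicate cardinalities. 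That over-count is therefore divided out by exactly the interpolation device behind Theorem~\ref{th: cardinality}: treat the weights of the auxiliary predicates (and, if needed, of $f$ itself) as indeterminates, evaluate WFOMC at polynomially many weight settings, and solve the resulting linear system; as with ordinary Skolemization, some of these weights may be negative, which causes no harm. Since every step is modular and independent of $n$, the transformations compose over the polynomially many counting quantifiers of $\Psi$, yielding $(\Psi', \Gamma)$ and hence the theorem.
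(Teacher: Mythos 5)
Your proposal follows essentially the same route the paper takes: reduce the C$^2$ sentence modularly and domain-independently to an FO$^2$ sentence over an extended vocabulary with cardinality constraints and new predicates carrying weights $\pm 1$, then combine Theorem~\ref{thm: beam} (liftability of universal FO$^2$, extended to full FO$^2$ via Skolemization) with Theorem~\ref{th: cardinality}. The paper itself does not prove this theorem but defers entirely to \cite{kuzelka2020weighted}, so your sketch is in fact more detailed than the paper's treatment; the only place where you stay at the level of intuition --- the claim that the over-counting factor introduced by the witness predicates for $\forall x.\,\exists^{\bowtie k} y$ depends on a model only through its predicate cardinalities --- is precisely the technical core that the cited work establishes.
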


The central idea behind Theorem \ref{kuzelka_C2} is that the problem of WFOMC for a C$^2$ sentence $\Phi$ can be transformed into a problem of WFOMC for an FO$^2$ sentence $\Phi'$ on an extended vocabulary with additional cardinality constraints $\Gamma$. This vocabulary includes additional weighted predicates that are assigned a weight of either $1$ or $-1$. For a more detailed explanation of Theorem \ref{kuzelka_C2}, please refer to \cite{kuzelka2020weighted} and \cite{AAAI_Sagar}. It is important to note that this transformation is modular, the modularity of this transformation has been utilized   to demonstrate the domain-liftability of C$^2$ extended with Tree axiom \cite{Tree} and Linear Order axiom \cite{Linear_Order_Axiom}. We will also exploit this modularity to get Theorem 
\ref{thm: counting_acyclic} of this paper, i.e., for extending domain-liftability of DAG constraints to C$^2$.

\subsection{Principle of Inclusion-Exclusion}
Given a set of finite sets $\{A_i\}_{i\in [n]}$, let $A_{J}:= \bigcap_{j\in J}A_j$ for an arbitrary subset $J$ of $[n]$. Then the principle of inclusion-exclusion (PIE) states that:
\begin{equation}
    \label{P_IE}
    \Big|\bigcup_{i}A_i\Big| = \sum_{\emptyset \neq J \subseteq [n]} (-1)^{|J|+1} \big| A_{J} \big|
\end{equation}
For all subsets $J,J' \subseteq [n]$, such that $|J| = |J'| = m$ for some $m \geq 1$, if $A_{J}$ and $A_{J'}$ have the same cardinality, then there are $\binom{n}{m}$ terms in equation \eqref{P_IE}, with value $A_{[m]}$. Hence, equation \eqref{P_IE} reduces to:  
\begin{equation}
    \label{P_IE_Symm}
    \Big|\bigcup_{i}A_i\Big| = \sum_{m=1}^{n} (-1)^{m+1} \binom{n}{m} A_{[m]}
\end{equation}
\begin{remark}
    PIE can be easily extended to the case when $A_i$ are sets of weighted FOL interpretations, where each interpretation $\omega$ has a weight $\w(\omega)$, where $\w$ is the symmetric weight function as given in Definition \ref{def: symm}. In this case PIE allows us to computed the weighted sum of all the interpretations in $\bigcup_{i}A_i$. 

    Let $\w(A_i)$ denote the weighted sum of all the interpretations in $A_i$, Then the PIE reduces to:
    \begin{equation}
        \label{P_IE_w}
        \w\Big( \bigcup_{i}A_i \Big) = \sum_{\emptyset \neq J \subseteq [n]} (-1)^{|J|+1} \w(A_{J})
    \end{equation}
Similarly, when $\w(A_{J})$ and $\w(A_{J'})$ are the same for each $m = |J| = |J'|$, we have that:
\begin{equation}
    \label{P_IE_Symm_w}
    \w\Big( \bigcup_{i}A_i \Big) = \sum_{m=1}^{n} (-1)^{m+1} \binom{n}{m} \w(A_{[m]})
\end{equation}
\end{remark}
\subsection{Counting Directed Acyclic Graphs}
\label{sec: DAG_Count}
A \emph{Directed Acyclic Graph (DAG)} is a directed graph such that  starting from an arbitrary node $i$ and traversing an arbitrary path along directed edges, we would never arrive at node $i$. We now present the derivation of a recursive formula for counting the number of DAGs.

Let the nodes be the set $[n]$ and let $A_i$ be the set of DAGs on $[n]$ where node $i$ has indegree zero. Since every DAG has at least one node with in-degree zero, we have that the total number of DAGs i.e. $a_n$ is given as $|\bigcup_{i\in [n]}A_i|$. The number of DAGs such that all nodes in $J \subseteq [n]$ have in-degree zero is then given as $A_{J}:= \bigcap_{j \in J} A_j$. Let us assume that $J = [m]$ for some $1 \leq m\leq n$. We now derive a method for computing $A_{[m]}$. We make the following three observations for deriving the formula for counting the DAGs in $A_{[m]}$.

\begin{itemize}
    \item Observation 1. If $\omega \in A_{[m]}$, then there are no edges between the nodes in $[m]$, as otherwise a node in $[m]$ will have a non-zero in-degree. In other words, only directed edges from $[m]$ to $[\bar{m}]$ are allowed.
    \item Observation 2. If $\omega \in A_{[m]}$, then subgraph of $\omega$ restricted to $[\bar{m}]$ i.e. $\omega \downarrow [\bar{m}]$ is a DAG. And the subgraph of $\omega$ restricted to $[m]$ is just an empty graph, i.e., the set of isolated nodes $[m]$ with no edges between them.
    \item Observation 3. Given a DAG on $[\bar{m}]$, then it can be extended to $2^{m(n-m)}$ DAGs in $A_{[m]}$. This is because DAGs in $A_{[m]}$ have no edges between the nodes in $[m]$. They only have outgoing edges from $[m]$ to $[\bar{m}]$. For extending a given DAG on $[\bar{m}]$ to a DAG in $A_{[m]}$, we can either draw an out-going edge from $[m]$ to $[\bar{m}]$ or not. Giving us two choices for each pair of nodes in $[m]\times [\bar{m}]$. Hence, there are $2^{|[m]\times [\bar{m}]|} = 2^{m(n-m)}$ ways to extend a given DAG on $[\bar{m}]$ to a DAG in $A_{[m]}$.   
\end{itemize}

The number of possible DAGs on $[\bar{m}]$ is $a_{n-m}$. Due to Observation 3, we have that $A_{[m]}$ has $2^{m(n-m)} a_{n-m}$ DAGs obtained by extending the DAGs on $[m]$. Furthermore, due to Observation 1 and Observation 2, these are all the possible DAGs in $A_{[m]}$. Hence, $|A_{[m]}| =  2^{m(n-m)} a_{n-m}$. Now, we can repeat this argument for any $m$ sized subset of $[n]$. Hence, if $|J|=|J'| = m$ then $A_{J} = A_{J'} =  2^{m(n-m)} a_{n-m}$. Hence, using the principle of inclusion-exclusion as given in equation \eqref{P_IE_Symm}, we have that:

\begin{equation}
\label{eq: Count_DAG}    
a_n = \sum_{m=1}^{n}(-1)^{m+1}\binom{n}{m}2^{m(n-m)}a_{n-m}
\end{equation}
Notice that replacing $n-m$ with $l$ in equation \eqref{eq: Count_DAG}, it can be equivalently written as:
\begin{equation}
    \label{eq: Count_DAG_l}    
    a_n = \sum_{l=0 }^{n-1}(-1)^{n-l+1}\binom{n}{l}2^{l(n-l)}a_{l}
\end{equation}
This change of variable allows us to write a bottom-up algorithm for counting DAGs, as given in Algorithm \ref{alg:algorithm_DAG}. Based on this algorithm we now show that counting DAGs can be performed in polynomial time with respect to the number of nodes $n$.

\begin{restatable}{prop}{prop: poly_DAG}
    The number of labelled DAGs over $n$ nodes can be computed in polynomial time.
\end{restatable}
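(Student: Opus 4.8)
The plan is to analyse the bottom-up procedure of Algorithm \ref{alg:algorithm_DAG}, which is nothing but a direct implementation of the recurrence \eqref{eq: Count_DAG_l} together with the base case $a_0 = 1$. First I would settle correctness by induction on $n$: assuming $a_0,\dots,a_{n-1}$ have been computed correctly as the numbers of labelled DAGs on $0,\dots,n-1$ nodes, the value assigned to $a_n$ by \eqref{eq: Count_DAG_l} is exactly the number of labelled DAGs on $n$ nodes. This is precisely the derivation preceding \eqref{eq: Count_DAG}: every DAG has at least one source, so $a_n = |\bigcup_{i\in[n]}A_i|$, Observations 1--3 give $|A_{[m]}| = 2^{m(n-m)}a_{n-m}$ for every $m$-subset, and the symmetric form \eqref{P_IE_Symm} of inclusion--exclusion yields \eqref{eq: Count_DAG}, which reindexes to \eqref{eq: Count_DAG_l}.

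Next I would bound the number of arithmetic operations. To obtain $a_n$ the algorithm fills a table of the $n+1$ entries $a_0,\dots,a_n$; computing entry $a_i$ from the previously stored entries is a sum of $i$ terms, each a product of a sign, a binomial coefficient $\binom{i}{l}$, a power $2^{l(i-l)}$, and a stored value $a_l$. The binomial coefficients of row $i$ can be generated with $O(i)$ multiplications and divisions (e.g. from $\binom{i}{l+1} = \binom{i}{l}\,\frac{i-l}{l+1}$), and each power $2^{l(i-l)}$ by repeated squaring; hence each entry costs $O(i)$ arithmetic operations and the whole table costs $O(n^2)$ arithmetic operations.

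Finally I would verify that all integers manipulated have size polynomial in $n$, so that each arithmetic operation is itself polynomial-time. Since there are at most $2^{\binom{n}{2}} < 2^{n^2}$ directed graphs on $n$ labelled nodes, $a_i \le 2^{n^2}$ for every $i \le n$, so each $a_i$ has $O(n^2)$ bits; the binomials are at most $2^n$ and the powers of two at most $2^{n^2}$. Moreover the sum of the absolute values of the terms in \eqref{eq: Count_DAG_l} for a fixed $i$ equals $\sum_{m}\binom{i}{m}|A_{[m]}|$, i.e. the number of pairs (DAG on $[i]$, a set of source nodes), which is at most $2^i\cdot 2^{\binom{i}{2}} \le 2^{2n^2}$; so no intermediate partial sum exceeds $O(n^2)$ bits either. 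Consequently each of the $O(n^2)$ arithmetic operations runs in time polynomial in $n$, and the total running time is polynomial in $n$.

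The only real subtlety — and the one place the argument could go wrong if stated carelessly — is this last bit-complexity bookkeeping: because \eqref{eq: Count_DAG_l} is an alternating sum, one must make sure that intermediate quantities are not blown up by cancellation artefacts, which is exactly what the bound on $\sum_m \binom{i}{m}|A_{[m]}|$ rules out.
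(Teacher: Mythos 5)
Your proposal is correct and takes essentially the same route as the paper: both analyse Algorithm \ref{alg:algorithm_DAG}, i.e.\ the bottom-up evaluation of recurrence \eqref{eq: Count_DAG_l} from the base case $a_0=1$, and observe that the loop runs $O(n)$ times with $O(n)$ operations per iteration, giving $O(n^2)$. You are in fact more careful than the paper's proof, which stops at the arithmetic-operation count; your explicit induction for correctness and your bit-size bookkeeping (e.g.\ $a_i \le 2^{\binom{i}{2}}$ and the bound on $\sum_m \binom{i}{m}\lvert A_{[m]}\rvert$ controlling the alternating sum) are sound refinements rather than a different argument.
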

\begin{proof}
    We define $a_0=1$ by convention and then by using equation \eqref{eq: Count_DAG_l} in Algorithm \ref*{alg:algorithm_DAG}, we incrementally compute $a_1, a_2 ...$, saving each result in a list given by $A$. The for loop runs in time $O(n)$, and in each run in line 5, we perform other $O(n)$ operations. Hence, the algorithm runs in $O(n^2)$. 
\end{proof}

\begin{algorithm}[tb]
    \caption{Number of DAG on $n$ nodes}
    \label{alg:algorithm_DAG}
    \begin{algorithmic}[1]
    \State \textbf{Input}: $n$
    \State \textbf{Output}: $a_n$
        \State $A[0] \gets 1$ 
        \For{$i=1$ to $n$}
        \State $A[i] \gets \sum_{l=0}^{i-1}(-1)^{i-l+1}\binom{i}{l}2^{l(i-l)}A[l]$ 
        \EndFor\\
        \textbf{return} $A[n]$
        \end{algorithmic}
\end{algorithm}


\section{WFOMC with DAG Axiom}
In this section we extend the approach used for counting DAGs in equation \eqref{eq: Count_DAG} to WFOMC of FO$^2$ and C$^2$ formulas with a DAG Axiom. First, we formally define the DAG axiom. We then present Proposition \ref{prop: mapping_1}, Proposition \ref{prop: extensions_k} and Proposition \ref{prop: atleast_weighted}, analogous to Observation 1, 2 and 3 respectively, as presented in the subsection \ref{sec: DAG_Count}. We then  use principle of inclusion-exclusion to compute the WFOMC of universally quantified FO$^2$ formulas extended with a DAG axiom. And show our method to be domain-liftable. The proposed apporach is then extended to admit full FO$^2$, Cardinality constraints and C$^2$. We finally extend the DAG axiom, with additional unary predicates that represent sources and sinks of the DAG.

\begin{definition}
    \label{def: Acyclic graph}
    Let $\Phi$ be a first-order logic sentence, possibly containing the binary relation $R$. An interpretation $\omega$ is a model of $\Psi = \Phi \land Acyclic(R)$ if and only if:
    \begin{itemize}
        \item $\omega$ is a model of $\Phi$, and
        \item $\omega_R$ forms a Directed Acyclic Graph
    \end{itemize}

\end{definition}

\begin{definition}
    \label{def: atleast_m}
    Let $\Psi = \Phi \land Acyclic(R)$, where $\Phi$ is a first-order logic sentence, be interpreted over the domain $[n]$. Let $1 \leq m \leq n$. Then $\omega$ is a model of $\Psi_{[m]}$ if and only if $\omega$ is a model of \/ $\Psi$ on $[n]$ and the domain elements in $[m]$ have zero $R$-indegree. 
\end{definition}
Notice that due to Definition \ref{def: atleast_m}, for the domain $[n]$, $\Psi_{[n]}$ is equivalent to $\Psi' = \Phi \land \neg R(x,y)$.  


\begin{restatable}{prop}{Propmapping}
    \label{prop: mapping_1}
 Let $\Psi = \forall xy. \Phi(x,y) \land Acyclic(R)$ and $\Psi' = \forall xy. \Phi(x,y) \land \neg R(x,y)$, where $\Phi(x,y)$ is quantifier-free, be  interpreted over $[n]$. Let $1 \leq m \leq n$. If $\omega$ is a model of $\Psi_{[m]}$,  then $\omega \downarrow [m]  \models \Psi'$ and $\omega \downarrow [\bar{m}]  \models \Psi $. 
\end{restatable}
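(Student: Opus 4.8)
The plan is to verify the two entailments separately, using Proposition~\ref{prop: ext} to reduce each to a statement about which 2-types are realized. Throughout, let $\omega$ be a model of $\Psi_{[m]}$, so that $\omega \models \forall xy. \Phi(x,y)$, $\omega_R$ is acyclic, and every domain element in $[m]$ has zero $R$-indegree in $\omega$.

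First I would show $\omega \downarrow [m] \models \Psi'$, i.e.\ $\omega \downarrow [m] \models \forall xy. \Phi(x,y) \land \neg R(x,y)$. Fix distinct $c,d \in [m]$. Since $d$ has zero $R$-indegree in $\omega$, we have $\omega \models \neg R(c,d)$, and symmetrically $\omega \models \neg R(d,c)$; in particular the restricted interpretation $\omega \downarrow [m]$ contains no $R$-edges among elements of $[m]$, so $\omega \downarrow [m] \models \forall xy. \neg R(x,y)$. For the $\Phi$ part: whatever 2-type $ijl(c,d)$ the pair $(c,d)$ realizes in $\omega$, it is the same 2-type it realizes in $\omega \downarrow [m]$ (restriction does not change the truth value of atoms over $\{c,d\}$), and by Proposition~\ref{prop: ext} applied to $\omega$ we have $ijl(x,y) \models \Phi(\{x,y\})$. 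Since this holds for every distinct pair in $[m]$, the converse direction of Proposition~\ref{prop: ext}, applied now to $\omega \downarrow [m]$, gives $\omega \downarrow [m] \models \forall xy. \Phi(x,y)$. Combining, $\omega \downarrow [m] \models \Psi'$.

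Next I would show $\omega \downarrow [\bar m] \models \Psi = \forall xy. \Phi(x,y) \land Acyclic(R)$. Acyclicity is immediate: $(\omega \downarrow [\bar m])_R$ is a subgraph of $\omega_R$ on the vertex subset $[\bar m]$, and any cycle in a subgraph is a cycle in the whole graph, so acyclicity is inherited. The $\Phi$ part is again a direct application of Proposition~\ref{prop: ext}: for any distinct $c,d \in [\bar m]$, the 2-type realized by $(c,d)$ in $\omega \downarrow [\bar m]$ equals the one realized in $\omega$, which is consistent with $\Phi(\{x,y\})$ since $\omega \models \forall xy. \Phi(x,y)$; hence by the converse direction of Proposition~\ref{prop: ext}, $\omega \downarrow [\bar m] \models \forall xy. \Phi(x,y)$. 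This completes the argument.

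There is no serious obstacle here; the only point requiring a little care is being explicit that restricting an interpretation to a subset of the domain leaves the realized 1-types and 2-types (equivalently, the truth values of all atoms over the retained constants) unchanged, so that Proposition~\ref{prop: ext} transfers cleanly in both directions between $\omega$ and its restrictions. Everything else — the absence of $R$-edges within $[m]$ and the inheritance of acyclicity by induced subgraphs — is immediate from the definitions.
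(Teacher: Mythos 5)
Your proof is correct and follows essentially the same route as the paper's: both arguments rest on the locality of universally quantified FO$^2$ satisfaction under domain restriction (the paper splits the grounding $\bigwedge_{(c,d)\in[n]^2}\Phi(c,d)$ over the blocks $[m]^2$, $[\bar m]^2$, and the mixed pairs, while you phrase the same observation via the 2-type characterization of Proposition~\ref{prop: ext}), together with the zero-indegree condition forcing $\forall xy.\neg R(x,y)$ on $[m]$ and the fact that induced subgraphs of DAGs are DAGs. No gaps.
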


\begin{proof}[Proof Sketch] The proposition is a consequence of following three facts: (1) Since, $\forall xy.\Phi(x,y)$ is an FO$^2$ formula, then if $\omega \models \forall xy.\Phi(x,y)$, then $\omega \downarrow [m] \models  \forall xy.\Phi(x,y)$ and $\omega \downarrow [\bar{m}] \models  \forall xy.\Phi(x,y)$; (2) In $\omega \models \Psi_{[m]}$, $\omega_{R}$ cannot have an edge in $[m]$. Hence, $\omega \downarrow [m] \models \Psi'$; and (3) Subgraph of a DAG is a DAG, hence if $\omega \models \Psi_{[m]}$, then  $\omega \downarrow [\bar{m}] \models \Psi_{[m]}$. We provide the detailed proof in Appendix.
\end{proof}

\begin{restatable}{prop}{prop: extensions_k }
    \label{prop: extensions_k}
     Let $\Psi = \forall xy. \Phi(x,y) \land Acyclic(R)$ and $\Psi' = \forall xy. \Phi(x,y) \land \neg R(x,y)$, where $\Phi(x,y)$ is quantifier-free, be interpreted over the domain $[n]$. Let $\omega'$ be a model of \/ $\Psi'$  on the domain $[m]$ and let $\omega''$ be a model of $\Psi$ on the domain $[\bar{m}]$. Then the number of extensions $\omega$, of $\omega' \uplus \omega''$, such that $\omega \models \Psi_{[m]} \land \bk$ is given as:
\begin{equation}
    \label{eq: extensions}
    \prod_{i,j\in[u]} n_{ij}^{k'_i \cdot k''_j} 
\end{equation}
where $k'_i$ and $k''_i$ are the number of domain constants realizing the $i^{th}$  1-type in $\omega'$ and $\omega''$ respectively. We define $n_{ijl}$ to be $1$ if  $ijl(x,y) \models \Phi(\{x,y\}) \land \neg R(y,x)$ and $0$ otherwise and $n_{ij} = \sum_{l \in [b]}n_{ijl}$. 
\end{restatable}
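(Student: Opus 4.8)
The plan is to regard an extension $\omega$ of $\omega' \uplus \omega''$ as nothing more than a choice of truth values for the \emph{cross} ground atoms, i.e. those mentioning one domain element from $[m]$ and one from $[\bar{m}]$: every other ground atom is already fixed, those inside $[m]$ by $\omega'$ and those inside $[\bar{m}]$ by $\omega''$. Since the $1$-type of a constant $c$ depends only on ground atoms mentioning $c$ alone, every extension has $1$-type cardinality vector $\bk' + \bk''$, so we may assume $\bk = \bk' + \bk''$ (otherwise there are no extensions). First I would group the cross atoms by the pair $(c,d)$ with $c\in[m]$, $d\in[\bar{m}]$ that they mention; because the $1$-types $i(c)$ and $j(d)$ are already pinned down, the cross atoms about $(c,d)$ are exactly the data of a $2$-table $l(c,d)$, so that $(c,d)$ realises the $2$-type $ijl(c,d)$. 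Distinct pairs use disjoint sets of cross atoms, so the choices over different pairs are independent, and the whole count reduces to counting, for each ordered pair of $1$-types $(i,j)$, how many $2$-tables $l$ are admissible for a pair with $c$ of type $i$ in $[m]$ and $d$ of type $j$ in $[\bar{m}]$.

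Next I would read off the admissible $l$ from the requirements making up $\omega \models \Psi_{[m]}$. For $\omega \models \forall xy.\Phi(x,y)$: by Proposition~\ref{prop: ext} this is equivalent to every pair of distinct constants realising a $2$-type consistent with $\Phi(\{x,y\})$; the pairs inside $[m]$ and inside $[\bar{m}]$ already satisfy this since $\omega' \models \Psi'$ and $\omega'' \models \Psi$ each entail $\forall xy.\Phi(x,y)$ and since $\omega\downarrow[m]=\omega'$, $\omega\downarrow[\bar{m}]=\omega''$, so the only new condition is $ijl(x,y) \models \Phi(\{x,y\})$ on cross pairs. For the zero-$R$-indegree requirement on $[m]$: inside $[m]$, $\omega' \models \neg R(x,y)$ already rules out $R$-edges, so the only new condition is that there be no $R$-edge from $[\bar{m}]$ into $[m]$, i.e. (orienting $x\mapsto c$, $y\mapsto d$) $ijl(x,y) \models \neg R(y,x)$. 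Conjoining these two conditions is exactly $n_{ijl}=1$.

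The remaining and main point is that, once these conditions hold, acyclicity of $\omega_R$ is automatic and imposes no further constraint, so the per-pair independence is preserved. I would argue this along the lines of Observations~1--3 of Section~\ref{sec: DAG_Count}: in $\omega_R$ there are no edges inside $[m]$ and no edges from $[\bar{m}]$ into $[m]$, hence every constant in $[m]$ has $R$-indegree $0$ and every cross $R$-edge runs from $[m]$ to $[\bar{m}]$; since $(\omega\downarrow[\bar{m}])_R = \omega''_R$ is a DAG, listing the elements of $[m]$ first (in any order) followed by a topological order of $\omega''_R$ produces a topological order of $\omega_R$, so $\omega_R$ is acyclic. This is the step that needs genuine care, everything else being bookkeeping of $2$-types and a product rule.

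Finally, for a fixed ordered pair $(i,j)$ the number of admissible $2$-tables is $n_{ij}=\sum_{l\in[b]}n_{ijl}$, and there are $k'_i$ constants of type $i$ in $[m]$ and $k''_j$ of type $j$ in $[\bar{m}]$, hence $k'_i k''_j$ such pairs $(c,d)$; note that no factor of $\tfrac12$ arises when $i=j$, unlike in Theorem~\ref{thm: beam}, because the two endpoints of a cross pair lie on opposite sides of the partition and are thereby distinguishable. Multiplying the independent contributions over all $i,j\in[u]$ yields $\prod_{i,j\in[u]} n_{ij}^{k'_i k''_j}$, as claimed.
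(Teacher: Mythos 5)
Your proposal is correct and follows essentially the same route as the paper's proof: extensions are identified with independent choices of a $2$-table for each cross pair $(c,d)\in[m]\times[\bar{m}]$, admissibility is read off as $ijl(x,y)\models\Phi(\{x,y\})\land\neg R(y,x)$, and the product rule gives $\prod_{i,j}n_{ij}^{k'_ik''_j}$. The one place you go beyond the paper is in explicitly verifying the sufficiency direction --- that acyclicity of $\omega_R$ is automatic via the topological-order argument --- which the paper's proof leaves implicit after establishing only the necessity of the per-pair conditions.
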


\begin{proof}
In order to obtain an interpretation $\omega \models  \Psi_{[m]} \land \bk $ on the domain $[n]$ from $\omega' \uplus \omega''$, we only need to extend $\omega' \uplus \omega''$ with interpretations of the ground-atoms containing $(c,d) \in [m] \times [\bar{m}]$. For a given pair $(c,d) \in [m] \times [\bar{m}]$, let $\omega' \models i(c)$ and $\omega'' \models j(d)$. Since $\omega$ is a model of $\forall xy. \Phi(x,y)$, we must have that  $ijl(c,d) \models \Phi(\{c,d\})$. Furthermore, since we want that every domain element in $[m]$ has indegree zero, we cannot have $R(d,c)$. Hence,  we must have that $ijl(c,d) \models \Phi(\{c,d\}) \land \neg R(d,c)$. Hence, the number of 2-tables that can be realized by $(c,d)$ is given by $n_{ij}$. Since there are $k'_i$ domain elements $c$ realizing the $i^{th}$ 1-type in $\omega'$ and $k^{''}_j$ domain elements $d$ realizing the $j^{th}$ 1-type in $\omega''$, the number of extensions $\omega$, of $\omega' \uplus \omega''$, such that $\omega \models \Psi_{[m]} \land \bk$ is given by expression \eqref{eq: extensions}.
\end{proof}

\begin{restatable}{prop}{prop_atleast_weighted}
    \label{prop: atleast_weighted}
        Let $\Psi = \forall xy. \Phi(x,y) \land Acyclic(R)$ and $\Psi' = \forall xy. \Phi(x,y) \land \neg R(x,y)$, where $\Phi(x,y)$ is quantifier-free. Then: 
    \begin{equation}
        \begin{split}
        \label{eq: k_k'_m}
        &\wfomc(\Psi_{[m]},\bk) =\\
          &\sum_{\substack{\bk= \bk'+\bk''\\ |\bk'|=m}}  \prod_{i,j\in[u]} r_{ij}^{k'_ik''_j} \cdot \wfomc(\Psi',\bk')\cdot\wfomc(\Psi,\bk'' )
        \end{split}
    \end{equation}
        where $\bk'+\bk''$ represents the element-wise sum of integer-vectors $\bk'$ and $\bk''$, such that $|\bk'|=m$ and $|\bk''| = |\bk|-m$. Also, $r_{ij} = \sum_{l}n_{ijl}v_{l}$, where $n_{ijl}$ is $1$ if  ${ijl(x,y) \models \Phi(\{x,y\}) \land \neg R(y,x)}$ and $0$ otherwise. 
            
    \end{restatable}

\begin{proof} The WFOMC of $\Psi'$ on $[m]$, with 1-type cardinality vector $\bk'$ is given as $\wfomc(\Psi',\bk')$. Similarly, the WFOMC of $\Psi$ on $[\bar{m}]$, with 1-type cardinality vector $\bk''$ is given as $\wfomc(\Psi,\bk'')$. Due to proposition \ref{prop: extensions_k}, each pair of models counted in $\wfomc(\Psi',\bk')$ and $\wfomc(\Psi,\bk'')$, can be extended in  $\prod_{i,j\in[u]} n_{ij}^{k'_ik''_j}$ ways to a model of $\Psi_{[m]} \land \bk$. It is easy to see that the total multiplicative weight contribution of these extensions is given as $\prod_{i,j\in[u]} r_{ij}^{k'_ik''_j}$. The summation in \eqref{eq: k_k'_m} runs over all possible  realizable 1-type cardinalities over $[m]$ and $[\bar{m}]$, represented by $\bk'$ and $\bk''$ respectively, such that they are consistent with $\bk$, i.e. when $\bk = \bk'+ \bk''$. Hence, formula \eqref{eq: k_k'_m} gives us the WFOMC of the models $\omega$, such that $\omega \downarrow [m]  \models \Psi'$, $\omega \downarrow [\bar{m}]  \models \Psi$ and $\omega \models \Psi \land \bk$ where the domain constant in $[m]$ have zero $R$ indegree. Due to proposition \ref{prop: mapping_1}, we have that these are all the models such that $\omega \models \Psi \land \bk$ and the domain constants in $[m]$ have zero $R$ indegree.    
\end{proof}



\begin{restatable}{prop}{prop:FO_DAG_Acyclic} The first order model count of the formula $\Psi = \forall xy. \Phi(x,y) \land Acyclic(R)$, where $\Phi(x,y)$ is quantifier-free, is given as:
    \begin{equation}
        \label{eq: DAG_Acyclic}
        \wfomc(\Psi ,\bk) = \sum_{m=1}^{|\bk|}(-1)^{m + 1}\binom{|\bk|}{m}\wfomc(\Psi_{[m]},\bk)
    \end{equation} 
    
\end{restatable}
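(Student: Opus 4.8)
The plan is to apply the weighted principle of inclusion-exclusion (PIE) exactly as in the derivation of the DAG-counting recursion \eqref{eq: Count_DAG}, but now over the set of weighted models of $\Psi = \forall xy.\,\Phi(x,y)\land Acyclic(R)$ with a fixed 1-type cardinality vector $\bk$. First I would fix $\bk$ and, writing $n=|\bk|$, define for each $i\in[n]$ the set $A_i$ of models $\omega \models \forall xy.\,\Phi(x,y)\land Acyclic(R)$ with $\omega\models\bk$ such that the domain element $i$ has zero $R$-indegree. Since every DAG on $[n]$ has at least one node of $R$-indegree zero, every model of $\Psi$ lies in some $A_i$, so $\wfomc(\Psi,\bk) = \w(\bigcup_{i\in[n]} A_i)$.

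Next I would identify the weighted intersections. For $J\subseteq[n]$, $A_J := \bigcap_{j\in J}A_j$ is precisely the set of models of $\Psi$ (with cardinality $\bk$) in which every element of $J$ has zero $R$-indegree; when $J=[m]$ this is by Definition \ref{def: atleast_m} exactly the set of models of $\Psi_{[m]}$ with cardinality $\bk$, so $\w(A_{[m]}) = \wfomc(\Psi_{[m]},\bk)$. The key symmetry observation is that $\w(A_J)$ depends only on $|J|=m$: relabelling the domain is a weight-preserving bijection (the weight function is symmetric, and being a model of $\Psi\land\bk$ is invariant under domain permutations), so any two $m$-subsets $J,J'$ give $\w(A_J)=\w(A_{J'})=\wfomc(\Psi_{[m]},\bk)$. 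Therefore the symmetric weighted PIE \eqref{P_IE_Symm_w} applies and yields
\begin{equation*}
    \wfomc(\Psi,\bk) = \w\Big(\bigcup_{i\in[n]}A_i\Big) = \sum_{m=1}^{n}(-1)^{m+1}\binom{n}{m}\w(A_{[m]}) = \sum_{m=1}^{|\bk|}(-1)^{m+1}\binom{|\bk|}{m}\wfomc(\Psi_{[m]},\bk),
\end{equation*}
which is \eqref{eq: DAG_Acyclic}.

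The only real content beyond invoking \eqref{P_IE_Symm_w} is checking that $\w(A_{[m]})$ as defined via the PIE construction genuinely equals $\wfomc(\Psi_{[m]},\bk)$ — i.e. that "every $j\in[m]$ has $R$-indegree zero" is the same condition as in Definition \ref{def: atleast_m} — and that the sets $A_i$ indeed cover all models of $\Psi\land\bk$, which is the defining property of a DAG (existence of a source). I expect the main (minor) obstacle to be purely bookkeeping: making sure the range of $m$ is stated consistently (the sum starts at $m=1$ since each $A_i$ is nonempty-indexed, and runs to $n=|\bk|$, with the $m=n$ term corresponding to the edgeless $R$-interpretation as noted after Definition \ref{def: atleast_m}), and noting that the statement header says "first order model count" while the displayed formula is for $\wfomc$ — the argument is identical, the weighted PIE remark after \eqref{P_IE} being exactly what licenses replacing cardinalities by weighted sums throughout.
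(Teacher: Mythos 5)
Your proposal is correct and follows essentially the same route as the paper: define $A_i$ as the models of $\Psi\land\bk$ in which element $i$ has zero $R$-indegree, identify $A_{[m]}$ with the models of $\Psi_{[m]}$, and apply the symmetric weighted inclusion-exclusion formula \eqref{P_IE_Symm_w}. The only cosmetic difference is that you justify $\w(A_J)=\w(A_{J'})$ for $|J|=|J'|$ directly via permutation invariance of the symmetric weight function, whereas the paper appeals to the fact that Propositions \ref{prop: mapping_1}--\ref{prop: atleast_weighted} hold for arbitrary $m$-sized subsets; both are valid.
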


\begin{proof} The proof idea is very similar to the case for counting DAGs as given in \eqref{eq: Count_DAG}. Let the domain be $[n]$, hence $|\bk| = n$. Let $A_{i}$ be the set of models $\omega\models \Psi$, such that $\omega$ has 1-type cardinality $\bk$ and the domain element $i$ has zero $R$-indegree. Since, each DAG has atleast one node with zero $R$-indegree, our goal is to compute  $\w(\cup_{i\in [n]}A_i)$. Let $J\subseteq [n]$ be an arbitrary set of domain constants. Let $A_{J}= \bigcap_{j\in J}A_j$ for an arbitrary subset $J$ of $[n]$. Then using principle of inclusion-exclusion as given in equation \eqref{P_IE_Symm_w}, we have that:
    \begin{equation}
        \label{e: PIE_Acyclic}
        \wfomc(\Psi ,\bk)  = \sum_{\emptyset \neq J \subseteq [n]} (-1)^{|J|+1} \w(A_{[m]}) 
    \end{equation}   
Now, $A_{[m]}$ is the set of models such that domain elements in $[m]$ have zero $R$-indegree. Hence, $A_{[m]}$ are exactly the models of $\Psi_{[m]}$. Furthermore, notice that in Proposition \ref{prop: mapping_1}, Proposition \ref{prop: extensions_k} and Proposition \ref{prop: atleast_weighted}, $[m]$ can be replaced with any $m$-sized subset $J$ of $[n]$. Hence, for all $ J\subseteq [n]$, such that $|J|=m$, we have that $\w(A_{J}) =   \wfomc(\Psi_{[m]},\bk)$. Hence, equation \eqref{e: PIE_Acyclic} reduces to equation \eqref{eq: DAG_Acyclic}.
\end{proof}
We make a change of variable in equation \eqref{eq: DAG_Acyclic} (similar to equation \eqref{eq: Count_DAG_l}), by replacing $m$ with $|\bk|-l$, to obtain the following equation:

\begin{align}
    \label{eq: DAG_Acyclic_FO_l}
    \begin{split}
        &\wfomc(\Psi ,\bk)= \\
    &\sum_{l=0}^{|\bk|-1}(-1)^{ |\bk| -l + 1}\binom{|\bk|}{l}\wfomc(\Psi_{[|\bk|-l]},\bk)
    \end{split}  
\end{align} 
We provide  pseudocode for evaluating equation \eqref{eq: DAG_Acyclic_FO_l} in Algorithm \ref{alg:algorithm_FO_DAG}, namely WFOMC-DAG. We now  analyse how WFOMC-DAG works and show that it runs in polynomial time with respect to domain cardinality $|\bk| = n$.  

WFOMC-DAG takes as input $\Psi = \forall xy. \Phi(x,y) \land Acyclic(R)$ and $\bk$ -- where $\Phi(x,y)$ is a quantifier-free formula and $\bk$ is a 1-type cardinality vector, such that $|\bk| = n$ -- and returns $\wfomc(\Psi,\bk)$. In line $3$, an array $A$ with $u$ indices is initiated and $A[\mathbf{0}]$ is assigned the value  $1$, where $\mathbf{0}$ corresponds to the $u$ dimensional zero vector. The for loop in line $5-7$ incrimentally computes $\wfomc(\Psi,\bp)$, where the loop runs over all $u$-dimensional integer vectors $\bp$, such that $p_i \leq k_i$, in lexicographical order. The number of possible $\bp$ vectors is atmost $n^{u}$. Hence, the  for loop in line 5 runs at most $n^{u}$ iterations. In line 6, we compute $\wfomc(\Psi, \bp)$ as given in equation \eqref{eq: DAG_Acyclic_FO_l}. Also in line 6,  the function $\overline{\wfomc}(\Psi_{[m]},\bp)$ --- that computes $\wfomc(\Psi_{[m]},\bp)$ ---is called at most $|\bp|-1$ times, which is bounded above by $n$. $A[\bp]$ stores the value $\wfomc(\Psi,\bp)$. Hence, as $\bp$ increments in lexicographical order, $A[\bp]$, stores the value of $\wfomc(\Psi,\bp)$. In the function $\overline{\wfomc}(\Psi_{[m]},\bs)$, the number of iterations in the for loop is bounded above by $n^{2u}$. And $\wfomc(\Psi',\bs')$ is an FO$^2$ WFOMC problem, again computable in polynomial time. Hence, the algorithm WFOMC-DAG runs in polynomial time w.r.t domain cardinality. Notice that since loop 5-7 runs in lexicographical order, the $A[\bs'']$ required in the function $\overline{\wfomc}(\Psi_{[m]},\bs)$ are always already stored in $A$. Now, there are only polynomially many $\bk$ w.r.t domain cardinality. Hence, computing $\wfomc(\Psi,\bk)$ over all possible $\bk$ values, we can compute $\wfomc(\Psi, n)$ in polynomial time w.r.t domain cardinality. Furthermore, using the modular WFOMC preserving skolemization process as provided in \cite{broeck2013}, we can easily extend this result to the entire FO$^2$ fragment. Hence, leading to the following theorem:


\begin{algorithm}[tb]
    \caption{WFOMC-DAG}
    \label{alg:algorithm_FO_DAG}
    \begin{algorithmic}[1]
    \State \textbf{Input}: $\Psi, \bk$
    \State \textbf{Output}: $\wfomc(\Psi,\bk)$
        \State $A[\mathbf{0}] \gets 1$ \Comment{$A$ has $u$ indices}
        \State \Comment{$\mathbf{0} = \langle 0,...,0 \rangle$ }
        \For{$\mathbf{0} < \bp \leq \bk$ where $ \bp \in \mathbb{N}_{0}^{u}$} \Comment{Lexical order} 
        \State ${\!A[\bp] \gets \!\sum_{l=0}^{|\bp|-1}(-1)^{|\bp|-l+1}\binom{|\bp|}{l}{\overline{\wfomc}}(\!\Psi_{[|\bp|-l]},\bp)}$ 
        \EndFor\\
        \textbf{return} $A[\bk]$ 
        \Function{$\overline{\wfomc}$}{$\Psi_{[m]}$, $\bs$} \Comment{Equation \eqref{eq: k_k'_m}}
        \State $S = 0$
        \For{$\bs' + \bs'' = \bs$ and $|\bs'| = m$}
        \State $S \gets S + \prod_{i,j\in[u]} r_{ij}^{s'_is''_j} \cdot \wfomc(\Psi',\bs')\cdot A[\bs'']$ 
        \EndFor
        \State \Return $S$
        \EndFunction
        \end{algorithmic}
\end{algorithm}

\begin{restatable}{thm}{thm: FO_DAG_Lifted } Let $\Psi = \Phi \land Acyclic(R)$, where $\Phi$ is an FO$^2$ formula. Then $\wfomc(\Psi,n)$ can be computed in polynomial time with respect to the domain cardinality.
\end{restatable}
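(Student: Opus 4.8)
The plan is to assemble the final theorem from the pieces already established in the excerpt, in essentially three moves. First, I would handle the universally quantified case $\Psi := \forall xy. \Phi(x,y) \land Acyclic(R)$: by Proposition \ref{prop:FO_DAG_Acyclic} we have the inclusion–exclusion identity expressing $\wfomc(\Psi,\bk)$ in terms of the $\wfomc(\Psi_{[m]},\bk)$, and by Proposition \ref{prop: atleast_weighted} each $\wfomc(\Psi_{[m]},\bk)$ decomposes into a sum over splittings $\bk = \bk' + \bk''$ of products $\prod_{i,j} r_{ij}^{k'_i k''_j} \cdot \wfomc(\Psi',\bk') \cdot \wfomc(\Psi,\bk'')$, where $\Psi' := \forall xy. \Phi(x,y) \land \neg R(x,y)$ is a plain universally quantified FO$^2$ formula. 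The recursion is well-founded because on the right-hand side the second WFOMC is evaluated on $\bk''$ with $|\bk''| = |\bk| - m < |\bk|$, and the base case $|\bk| = 0$ is handled by the convention $A[\mathbf 0] = 1$.

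Second, I would turn this recursion into the complexity bound by tracing Algorithm \ref{alg:algorithm_FO_DAG} (WFOMC-DAG), exactly as the paragraph preceding the theorem does. The key counting facts are: there are at most $n^u$ vectors $\bp$ with $\mathbf 0 < \bp \le \bk$, so the outer loop runs at most $n^u$ times; each evaluation of $\overline{\wfomc}(\Psi_{[m]},\bp)$ ranges over at most $n^{2u}$ pairs $(\bs',\bs'')$ with $\bs' + \bs'' = \bs$; each term requires one call to $\wfomc(\Psi',\bs')$, which is a base FO$^2$ WFOMC and hence polynomial by Theorem \ref{thm: beam}; and the already-computed values $A[\bs'']$ are available because the outer loop proceeds in lexicographic order and $\bs'' < \bs$ componentwise. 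Multiplying these bounds gives a polynomial running time for $\wfomc(\Psi,\bk)$, and summing over the polynomially many 1-type cardinality vectors $\bk$ with $|\bk| = n$ yields $\wfomc(\Psi,n)$ in polynomial time.

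Third, I would lift from the universally quantified fragment to all of FO$^2$ by invoking the modular WFOMC-preserving Skolemization of \cite{broeck2013}: any FO$^2$ sentence $\Phi$ reduces modularly to a universally quantified FO$^2$ sentence $\Phi^*$ over an extended vocabulary with possibly negative weights, and by modularity this reduction commutes with the extra conjunct $Acyclic(R)$ — that is, $\wfomc(\Phi \land Acyclic(R), n) = \wfomc(\Phi^* \land Acyclic(R), n)$, since $Acyclic(R)$ only constrains the original predicate $R$ and can be treated as part of the "$\Lambda$" in the definition of modularity. Applying the first two steps to $\Phi^* \land Acyclic(R)$ then finishes the proof.

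The main obstacle, and the point that deserves the most care, is the justification that the recursion in Proposition \ref{prop: atleast_weighted} is genuinely well-founded and correctly implemented: one must verify that $\Psi$ on the right-hand side is evaluated only on strictly smaller domains (so that the bottom-up lexicographic fill of $A$ never reads an uninitialised cell), and that $\Psi_{[m]}$ for every $m$-subset — not just the canonical $[m]$ — contributes the same weight, which is exactly the symmetry observation used to collapse the full PIE sum \eqref{P_IE_w} into the binomial form \eqref{eq: DAG_Acyclic}. A secondary but real subtlety is that Skolemization introduces negative weights, so one should note that none of the preceding propositions or the interpolation argument of Theorem \ref{th: cardinality} assumed positivity of weights; the arithmetic identities hold over $\mathbb{R}$ regardless.
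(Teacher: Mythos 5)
Your proposal is correct and follows essentially the same route as the paper: the inclusion--exclusion identity of Proposition 6 combined with the decomposition of Proposition 5, the complexity trace of Algorithm 2 (outer loop over at most $n^{u}$ vectors $\bp$, inner sum over at most $n^{2u}$ splittings, base FO$^2$ WFOMC calls, lexicographic fill of $A$), and the final lift to full FO$^2$ via the modular Skolemization of \cite{broeck2013}. Your added remarks on well-foundedness of the recursion and on the arithmetic being valid for negative weights are consistent with, and slightly more explicit than, the paper's own argument.
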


Using Theorem \ref{th: cardinality} and Remark \ref{rem: FOL_inexpressible_Card}, we can also extend domain-liftability of FO$^2$, with DAG axiom and cardinality constraints.

\begin{restatable}{thm}{thm: C_DAG_Lifted }Let $\Psi = \Phi \land Acyclic(R)$, where $\Phi$ is an FO$^2$ formula, potentially also containing cardinality constraints. Then $\wfomc(\Psi,n)$ can be computed in polynomial time with respect to the domain cardinality.
\end{restatable}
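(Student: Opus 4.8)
The plan is to chain together two results that have already been established in the paper, so the argument is essentially a bookkeeping exercise. The target statement asserts domain-liftability of $\Psi := \Phi \land Acyclic(R)$ where $\Phi$ is an FO$^2$ formula possibly containing cardinality constraints. First I would separate $\Phi$ into its purely logical part $\Phi_0$ (an FO$^2$ sentence) and a conjunction of cardinality constraints $\Gamma$, so that $\Psi = \Phi_0 \land \Gamma \land Acyclic(R)$. By the previous theorem in the excerpt (Theorem on FO$^2$ with DAG axiom), $\wfomc(\Phi_0 \land Acyclic(R), \bk)$, and hence $\wfomc(\Phi_0 \land Acyclic(R), n)$, is computable in polynomial time with respect to $n$; crucially the proof there actually computes the count for each $1$-type cardinality vector $\bk$ via Algorithm \ref{alg:algorithm_FO_DAG}, so we have the stronger per-$\bk$ oracle at our disposal.

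Next I would invoke Theorem \ref{th: cardinality}, which states that for \emph{any} first-order sentence $\Phi$ and \emph{any} cardinality constraint $\Gamma$, $\wfomc(\Phi \land \Gamma, \bk)$ is computable in polynomial time relative to a $\wfomc(\Phi, \bk)$ oracle. The key point — spelled out in Remark \ref{rem: FOL_inexpressible_Card} — is that the proof of Theorem \ref{th: cardinality} never uses first-order definability of $\Phi$: it only uses that $\wfomc(\Phi, \bk)$ can be evaluated for polynomially many choices of the symmetric weight function $(w,\bar w)$, then recovers the per-predicate-cardinality coefficients $A_{\bm\mu}$ by Gaussian elimination, and finally sums $A_{\bm\mu} \w_{\bm\mu}$ over those $\bm\mu$ satisfying $\Gamma$. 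Taking $\Phi := \Phi_0 \land Acyclic(R)$ in that theorem, and supplying the oracle by the previous theorem, we get $\wfomc(\Phi_0 \land Acyclic(R) \land \Gamma, \bk)$ in polynomial time; summing over the polynomially many $\bk$ gives $\wfomc(\Psi, n)$ in polynomial time.

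One detail I would make sure to check: the oracle required by Theorem \ref{th: cardinality} must return $\wfomc(\Phi_0 \land Acyclic(R), \bk)$ for arbitrary symmetric weights $(w,\bar w)$, including the negative weights introduced by Skolemization, and Algorithm \ref{alg:algorithm_FO_DAG} indeed works for arbitrary symmetric weights $r_{ij}, w_i, v_l$ since those are the only way weights enter equations \eqref{eq: atleast_weighted} and \eqref{eq: DAG_Acyclic_FO_l} — so no positivity is used anywhere. I would also note that $Acyclic(R)$ imposes no constraint that interferes with the cardinality-extraction argument: the DAG axiom is just "some sentence" $\Lambda$ in the modularity sense, and the reduction in Theorem \ref{th: cardinality} treats it as a black box through the oracle.

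I do not anticipate a genuine obstacle here; the statement is a corollary by design, and the "hard part" is purely expository — being careful to state that the per-$\bk$ version of the previous theorem is what is needed, that the weight function in the oracle call ranges over all symmetric (possibly negative) weights, and that Remark \ref{rem: FOL_inexpressible_Card} licenses applying Theorem \ref{th: cardinality} to the non-first-order-definable sentence $\Phi_0 \land Acyclic(R)$. Accordingly the proof I would write is just: decompose $\Psi$, apply the previous theorem to get the oracle, apply Theorem \ref{th: cardinality} with Remark \ref{rem: FOL_inexpressible_Card}, sum over $\bk$, done.
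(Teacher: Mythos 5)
Your proposal is correct and follows essentially the same route as the paper, which likewise proves this theorem by combining the preceding FO$^2$-with-DAG-axiom result (as a per-$\bk$ oracle) with Theorem \ref{th: cardinality} and Remark \ref{rem: FOL_inexpressible_Card}. The additional points you check --- that the oracle must handle arbitrary (possibly negative) symmetric weights and that first-order definability of the oracle sentence is never needed --- are exactly the details the paper leaves implicit.
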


Furthermore, since WFOMC of any C$^2$ formula can be modularly reduced to WFOMC of an FO$^2$ formula with cardinality constraints \cite{kuzelka2020weighted}. We also have the following theorem:

\begin{restatable}{thm}{thm: Counting}
    \label{thm: counting_acyclic}
    Let $\Psi = \Phi \land Acyclic(R)$, where $\Phi$ is an C$^2$ formula. Then $\wfomc(\Psi,n)$ can be computed in polynomial time with respect to the domain cardinality.
\end{restatable}

\subsection{Source and Sink}
\begin{definition}
    \label{def: Acyclic graph}
    Let $\Phi$ be a first order sentence, possibly containing some binary relation $R$, a unary relation $Source$ and a unary relation $Sink$. Then a structure $\omega$ is a model of  $\Psi = \Phi \land Acyclic(R, Source, Sink)$ if and only if:
    \begin{itemize}
        \item $\omega$ is a model of $\Phi \land Acyclic(R)$, and
        \item In the DAG represented by $\omega_{R}$, the sources of the DAG are interpreted to be true in $\omega_{Source}$.
        \item In the DAG represented by $\omega_{R}$, the sinks of the DAG are interpreted to be true in $\omega_{Sink}$. 
    \end{itemize}

\end{definition}
The $Source$ and the $Sink$ predicate can allow encodicng constraints like $\exists^{=k}x.Source(x)$ or $\exists^{=k}x.Sink(x)$.

\begin{restatable}{thm}{Source_Sink}Let $\Psi = \Phi \land Acyclic(R,Source,Sink)$, where $\Phi$ is a C$^2$ formula. Then $\wfomc(\Psi,n)$ can be computed in polynomial time with respect to the domain cardinality.
\end{restatable}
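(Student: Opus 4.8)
The plan is to mimic the argument that extended the DAG axiom from FO$^2$ to C$^2$ via cardinality constraints (Theorems \ref{th: cardinality}--\ref{kuzelka_C2}), but now with the additional bookkeeping needed to track which domain elements are sources and which are sinks. First I would observe that the $Source$ and $Sink$ predicates are ordinary unary predicates in the language, so the real content of $Acyclic(R,Source,Sink)$ is the \emph{semantic} constraint tying their extensions to the in-/out-degree structure of $\omega_R$. The key reduction step is to show that this semantic constraint can be captured by running the inclusion-exclusion machinery of Proposition \ref{prop: atleast_weighted} and Proposition \ref{prop:FO_DAG_Acyclic} in a refined form that separates, at each peeling stage, the elements of current $R$-indegree zero (candidate sources) from the rest, and similarly identifies elements of $R$-outdegree zero as sinks.

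Concretely, I would proceed as follows. (1) Reduce $\Phi$ (a C$^2$ sentence) modularly to an FO$^2$ sentence with cardinality constraints on an extended vocabulary, exactly as in Theorem \ref{kuzelka_C2}; this is sound because, as noted in the paper, that transformation is modular, and $Acyclic(R,Source,Sink)$ only involves $R$, $Source$, $Sink$, which are untouched by the reduction. So it suffices to handle $\Psi := \Phi \land Acyclic(R,Source,Sink)$ with $\Phi$ universally quantified FO$^2$ (plus cardinality constraints, handled last via Theorem \ref{th: cardinality}). (2) A node is a source of $\omega_R$ iff it has $R$-indegree zero; a node is a sink iff it has $R$-outdegree zero. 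In the peeling recursion of Section~3, when we remove a set $[m]$ of indegree-zero nodes, these are precisely the nodes that could be sources \emph{in the whole graph}, but a node in $[m]$ is genuinely a source iff additionally it has zero indegree from $[\bar m]$ --- which it automatically does, since edges only go from $[m]$ to $[\bar m]$. Hence every node ever peeled off at the first level is a true source, and conversely every true source is peeled off at the first level. So I would refine $\Psi_{[m]}$ to a variant $\Psi_{[m]}^{\mathit{src}}$ in which the $m$ peeled nodes must satisfy $Source$ and the remaining $n-m$ nodes must satisfy $\neg Source$ except where forced; and Proposition \ref{prop: atleast_weighted} adapts verbatim because the constraint "$Source(x)$ holds exactly on $[m]$" is just a further restriction on admissible $1$-types of $\omega'$ versus $\omega''$, i.e. a partition of the $1$-type index set, which only changes which $r_{ij}$, $\wfomc(\Psi',\bk')$, $\wfomc(\Psi,\bk'')$ terms appear. (3) For $Sink$, the symmetric observation is that sinks of $\omega_R$ are exactly the indegree-zero nodes of the reverse graph $\omega_{R^{-1}}$; equivalently one peels from the other end. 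The cleanest route is to handle $Sink$ by a second, independent inclusion-exclusion over outdegree-zero nodes, or simply to absorb "$x$ is a sink of $\omega_R$" into the recursion's base case: in $\Psi_{[n]}$ (the empty-$R$ instance), \emph{every} node is both a source and a sink, and more generally, in $\omega \downarrow[\bar m] \models \Psi$ the sinks of the sub-DAG coincide with the sinks of the whole DAG (since no edges leave $[\bar m]$... wait, edges from $[m]$ enter $[\bar m]$ but none leave it, so outdegree of a node in $[\bar m]$ is unchanged, hence its sink-status is unchanged). Thus $Sink$ is preserved downward through the recursion and can be carried as an extra label on $1$-types throughout, exactly like $Source$ but without interfering with the peeling direction. (4) Finally, re-run Algorithm~\ref{alg:algorithm_FO_DAG} with $1$-types refined by the truth values of $Source$ and $Sink$; the number of $1$-types at most doubles twice (a constant factor in the exponent $u$), so the running time stays polynomial in $n$, and Theorem \ref{th: cardinality} lets us reimpose the cardinality constraints (including $\exists^{=k}x.Source(x)$, $\exists^{=k}x.Sink(x)$, which are $C^2$ and hence already folded into the cardinality-constraint layer) at polynomial cost.

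The main obstacle I anticipate is the $Sink$ predicate interacting with the $Source$-driven peeling direction. The source side is essentially free because peeling indegree-zero nodes is exactly the combinatorial content of Proposition~\ref{prop: mapping_1}. But to certify that a node is a sink, one must know its \emph{outdegree} in the final graph, and outgoing edges of a peeled node $c \in [m]$ go to $[\bar m]$ and are chosen precisely in the extension step of Proposition~\ref{prop: extensions_k} --- so "$c$ is a sink" is a global constraint that is not local to any single recursion level. The fix (sketched in (3)) is the observation that a node's outdegree is fully determined once all nodes below it in the peeling order have been placed, because edges only ever point from earlier-peeled to later-peeled layers; hence a node $c\in[m]$ is a sink of $\omega$ iff $c$ has no $R$-edge to $[\bar m]$ in the extension, a condition that \emph{is} local to the extension step $[m]\times[\bar m]$ and can therefore be encoded by forbidding the $2$-tables with $R(x,y)$ true whenever $x$'s $1$-type carries the $Sink$ label --- i.e. by a further refinement of the $n_{ij}$/$r_{ij}$ coefficients. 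Making this precise, and checking that the two refinements ($Source$ on the $1$-type of $\omega'$, $Sink$ on the $1$-types of both $\omega'$ and $\omega''$ via constrained $2$-tables) compose consistently in the recursion of Proposition~\ref{prop: atleast_weighted}, is the crux; once done, polynomial-time liftability follows as in the $Acyclic(R)$ case.
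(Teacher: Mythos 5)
You have missed the one-line argument the paper uses: the semantic conditions defining $Source$ and $Sink$ are themselves first-order expressible with two variables, namely $\forall x.\, Source(x) \leftrightarrow \neg \exists y.\, R(y,x)$ and $\forall x.\, Sink(x) \leftrightarrow \neg \exists y.\, R(x,y)$. Conjoining these to $\Phi$ turns $\Psi$ into a C$^2$ sentence with a plain $Acyclic(R)$ axiom, and the result follows immediately from the preceding theorem; no modification of the peeling recursion, the $1$-types, or the coefficients $r_{ij}$ is needed.

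Beyond being far more laborious than necessary, your route has a genuine gap on the $Source$ side. The inclusion-exclusion sets $A_{[m]}$ consist of models in which \emph{at least} the elements of $[m]$ have $R$-indegree zero; models in $A_{[m]}$ may well have additional indegree-zero nodes in $[\bar m]$. So your claim that ``every true source is peeled off at the first level'' misreads the PIE semantics, and replacing $\Psi_{[m]}$ by a variant in which $Source$ holds \emph{exactly} on $[m]$ destroys the identity $A_J = \bigcap_{j\in J} A_j$ on which equation \eqref{eq: DAG_Acyclic} rests. The correct constraint for a node $d \in [\bar m]$ labelled $\neg Source$ is ``$d$ has at least one incoming $R$-edge, from $[m]$ or from within $[\bar m]$'': this is an existential condition ranging over both sides of the partition, so it neither factors into the product $\prod_{i,j} r_{ij}^{k'_i k''_j}$ of Proposition \ref{prop: extensions_k} nor localises to the subproblem on $[\bar m]$. (Your $Sink$ analysis is essentially sound, since outdegree is preserved under restriction to $[\bar m]$ and is local to the extension step for nodes in $[m]$, but the same ``at least one outgoing edge'' issue arises for nodes labelled $\neg Sink$.) Repairing this would require either a further inclusion-exclusion or an existential-quantifier elimination --- which is exactly what the paper's reduction to an FO$^2$/C$^2$ sentence accomplishes for free.
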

\begin{proof}
  The sentence  $\Psi$ can be equivalently written as: 
  \begin{align}
    \begin{split}
    &\Phi \land Acyclic(R) \\
    & \land \forall x. Source(x) \leftrightarrow \neg \exists y. R(y,x) \\
    & \land \forall x. Sink(x) \leftrightarrow \neg \exists y. R(x,y)\\
\end{split}
  \end{align}
which is a FO$^2$ sentence extended with DAG constraint.
\end{proof}

\section*{Conclusion}
In this paper we demonstrate the domain liftability of FO$^2$ and C$^2$ extended with a Directed Acyclic Graph Axiom. We then extend our results with Source and Sink predicates, which can allow additional constraints on the number of sources and sinks in a DAG. These results can potentially allow better modelling of datasets that naturally appear with a DAG structure \cite{Citeseer}. In future, we aim at investigating successor, predecessor and ancestory constraints in FOL extended with DAG axioms.

\begin{acknowledgements}
  We would like to thank Andrea Micheli for the fruitful and interesting discussion time.
\end{acknowledgements}
\newpage
\section*{Appendix}
\Propmapping*
\begin{proof} We have that $\omega \models \Psi$. Hence, we have that: 
     \begin{align*}
        \omega &\models \bigwedge_{(c,d) \in [n]^{2}} \Phi(c,d) \\
        \Rightarrow \omega &\models \bigwedge_{(c,d) \in [m]^{2}} \Phi(c,d) \bigwedge_{(c,d) \in [\bar{m}]^{2}} \Phi(c,d) \\
        &\bigwedge_{(c,d) \in [\bar{m}]\times [m]} \Phi(c,d) \bigwedge_{(c,d) \in [m] \times [\bar{m}]} \Phi(c,d) 
    \end{align*}
Since, $\omega \models \bigwedge_{(c,d) \in [m]^{2}} \Phi(c,d)$ and $\omega \models \bigwedge_{(c,d) \in [\bar{m}]^{2}} \Phi(c,d)$, we have that  $\omega \downarrow [m]  \models \forall xy. \Phi(x,y)$ and $\omega \downarrow [\bar{m}]  \models \forall xy.\Phi(x,y)$. Now, since $[m]$ has zero $R$-indegree, it can only have outgoing $R$-edges to $[\bar{m}]$. Hence, we can infer that $\omega \downarrow [m]  \models \forall xy. \neg R(x,y)$. Now, $\omega_{R}$ is a DAG, then so is $\omega_{R}\downarrow [\bar{m}]$. Hence, $\omega\downarrow [\bar{m}] \models Acyclic(R)$. Hence, $\omega \downarrow [m]  \models \Psi'$ and $\omega \downarrow [\bar{m}]  \models \Psi$.
\end{proof}

\newpage

\bibliography{main}

\end{document}